\def\BibTeX{{\rm B\kern-.05em{\sc i\kern-.025em b}\kern-.08em
    T\kern-.1667em\lower.7ex\hbox{E}\kern-.125emX}}
\DeclareMathOperator*{\argmin}{arg\,min}
\newtheorem{theorem}{Theorem}
\newtheorem{lemma}{Lemma}
\newtheorem{proof}{Proof}
\newcommand{\etal}{{\em et al\,.}}       % et al.
\newcommand{\eg}{{\em e.g.}}           % e.g.
\newcommand{\ie}{{\em i.e.}}           % i.e.
\newcommand{\etc}{{\em etc.}}         % etc.
\newcommand\redsout{\bgroup\markoverwith{\textcolor{red}{\rule[0.5ex]{2pt}{0.4pt}}}\ULon}
\newcommand{\yh}[1]{\textcolor{black}{#1}}
\newcommand{\mr}[1]{\textcolor{black}{#1}}
\newcommand{\yhk}[1]{\textcolor{black}{#1}}
\begin{document}
\title{Learning-based Computer-aided Prescription Model for Parkinson's Disease: A Data-driven Perspective}

\author{Yinghuan~Shi,
        Wanqi~Yang,
        Kim-Han Thung,
        Hao~Wang,
        Yang~Gao,
        Yang~Pan,
        Li~Zhang,
        Dinggang~Shen% <-this % stops a space
\thanks{Yinghuan Shi and Yang Gao are with the State Key Laboratory for Novel Software Technology, Nanjing University, China. They are also with National Institute of Healthcare Data Science, Nanjing University, China. E-mail: syh@nju.edu.cn, gaoy@nju.edu.cn.}% <-this % stops a space
\thanks{Wanqi Yang is with the School of Computer Science, Nanjing Normal University, China. E-mail: yangwq@njnu.edu.cn.}
\thanks{Kim-Han Thung is with the Department of Radiology and BRIC, UNC Chapel Hill, US. E-mail: khthung@med.unc.edu.}
\thanks{Hao Wang is with Inception Institute of Artificial Intelligence, UAE. E-mail: wanghao.hku@gmail.com}
\thanks{Yang Pan and Li Zhang are with the Nanjing Medical University and Nanjing Brain Hospital, China. E-mail: neuropanyang@163.com and neurozhangli@163.com.}
\thanks{Dinggang Shen is with Department of Research and Development, Shanghai United Imaging Intelligence Co., Ltd., Shanghai, China, and also the Department of Brain and Cognitive Engineering, Korea University, Korea. E-mail: Dinggang.Shen@gmail.com.}% <-this % stops a space
\thanks{This work is supported by National Key Research and Development Program of China (2019YFC0118300), Natural Science Foundation of China (61673203, 61603193), and Jiangsu Provincial Key Research and Development Project (BE2018610).}
%\thanks{Yang Pan and Li Zhang are the co-corresponding authors.}
%\thanks{Manuscript received xx xx, 2020; revised xx xx, 2020.}
}

% The paper headers
\markboth{}%
{Shell \MakeLowercase{\textit{et al.}}: Bare Demo of IEEEtran.cls for IEEE Journals}

% make the title area
\maketitle
% As a general rule, do not put math, special symbols or citations
% in the abstract or keywords.
%\vspace*{-1cm}
\begin{abstract}
In this paper, we study a novel problem: \yh{\lq\lq\emph{automatic prescription recommendation for PD patients.}\rq\rq}
\yh{To realize this goal, we first build a dataset by collecting 1) symptoms of PD patients, and 2) their prescription drug provided by neurologists.
Then, we build a novel computer-aided prescription model by learning the relation between observed symptoms and prescription drug. Finally, for the new coming patients, we could recommend (predict) suitable prescription drug on their observed symptoms by our prescription model.}
%This computer-aided prescription task can be realized by analyzing the observed motor and non-motor symptoms of PD patients: the motor symptoms reflect the status of motor skill decline, while the non-motor symptoms describe the status of cognitive ability.
%Specifically, with these recorded symptoms, we design a multi-modality representation by integrating both the feature-based and similarity-based representations. Then,
From the methodology part, our proposed model, namely Prescription viA Learning lAtent Symptoms (PALAS), could recommend prescription using the multi-modality representation of the data. In PALAS, a latent symptom space is learned to better model the relationship between symptoms and prescription drug, as there is a large semantic gap between them. Moreover, we present an efficient alternating optimization method for PALAS. We evaluated our method using the data collected from 136 PD patients \yh{at Nanjing Brain Hospital, which can be regarded as a large dataset in PD research community.}
%which took us two years in data collection and can be regarded as a large dataset in PD research community.
The experimental results demonstrate the effectiveness and clinical potential of our method in this recommendation task, if compared with other competing methods.
\end{abstract}

% Note that keywords are not normally used for peerreview papers.
\begin{IEEEkeywords}
Computer-Aided Prescription, Latent Symptom Space, Multi-Modality Learning, Multi-Label Learning
\end{IEEEkeywords}

\IEEEpeerreviewmaketitle

\section{Introduction}
\IEEEPARstart{P}{arkinson's} Disease (PD)\footnote{http://www.parkinson.org/understanding-parkinsons/10-early-warning-signs} is a chronic and progressive neurological disorder that associated with symptoms such as trouble of moving and sleeping, tremor, shaking, dizziness, and fainting. It is now believed as the second most common neurodegenerative disease, affecting nearly one million people in US \cite{Hirschauer2015}. Though researches show that PD might be related to genetic and environmental factors, the exact cause of PD is still remained unknown \cite{SAMII20041783}.

 \begin{figure}[htbp]
 \centering
 \includegraphics[width=3.4in]{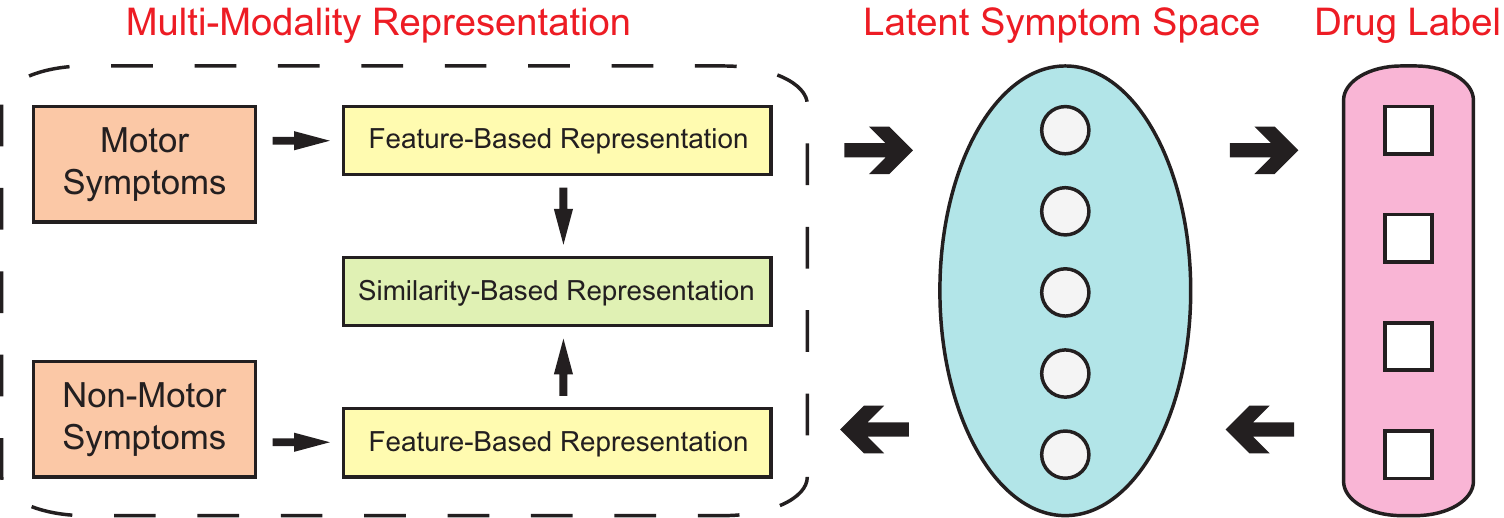}
 \caption{The proposed PALAS model.}\label{framework}
 %\vspace{-0.5cm}
 \end{figure}

In recent years, aiming to guide better intervention therapy, computer-aided methods for PD patients have aroused lots of research interests \cite{Adeli2016206}\cite{Cai19JBHI}\cite{Cara18JBHI}\cite{Cor19JBHI}\cite{Hirschauer2015}\cite{Lei19JBHI}\cite{Limiccai17}\cite{Liu2016}\cite{Oliveira15}\cite{Singh201530}. All these researches are focusing on \emph{computer-aided diagnosis}, which is used to predict whether a patient belongs to PD in the early stage.
%To the best of our knowledge, no study has been done on \emph{computer-aided prescription} for PD treatment, \ie, to choose the suitable daily treatment drugs for PD patients based on their observed symptoms.
\mr{To the best of our knowledge, the study of \emph{computer-aided prescription} for PD has seldom been touched, whose goal is to choose the suitable treatment drugs for PD patients based on their observed symptoms.}
Motivated by the success application of artificial intelligence in various studies, we in this paper investigate a novel problem: \yh{\lq\lq\textbf{\emph{automatically recommend the suitable drugs for PD patients according to their observed symptoms.}}\rq\rq}

This prescription recommendation (or prediction) task is clinically meaningful and practically feasible. In conventional human-based prescription, we require professional experience of a \yh{neurologist} to prescribe medication for a PD patient. Unfortunately, \mr{medical treatment for PD is very much personalized.} In reality, different \yh{neurologists} usually prescribe different treatment drugs according to their own experience and judgement. Furthermore, if other factors such as prices and hidden side effects of drugs are considered, seeking a (near) optimal prescription for each individual patient would become even more complicated and difficult. Thus, a data-driven drug prescription, generated by machine learning technique that incorporates previous human-based prescription results, will be very helpful, and can be used as a guidance or reference, especially when there is a large diversity of views among different \yh{neurologists}.

To realize this goal, \emph{firstly}, we have spent two years to collect the motor and non-motor symptoms from 136 PD patients, which could be regarded as a large population in PD research community. Basically, the motor symptoms record the daily body-movement of the subjects, while the non-motor symptoms describe the mental states of the subjects, such as impression, psychology, \etc
All these PD patients also have their symptoms improved after clinical treatment, where the drugs used in the successive treatment are recorded. These information could be used to evaluate the performance of computer-aided prescription using the human-based prescriptions as the ground truth. \footnote{\yh{Note that in Figure \ref{framework}, drug label is a binary matrix with rows as patients and columns as drugs to record if current patient takes a certain drug (\ie, 1 indicates yes and 0 indicates no).}}

\emph{Secondly}, using the recorded data from the observed motor and non-motor symptoms, we combine feature-based and similarity-based representations to obtain a multi-modality representation.

\emph{Finally}, for prediction, we propose a novel model called Prescription viA Learning lAtent Symptoms (PALAS) (see Figure \ref{framework}), which is inspired and motivated by the following observations:
\begin{itemize}
\item \textbf{Sparse drug label}. One single type of drug can relieve only a small amount of all symptoms. Also, it is very rare to have a patient that is prescribed with all kinds of treatment drugs. Instead, most patients would only take a very few from all the candidates of treatment drugs, making the drug labels for each subject being sparse.
\item \textbf{Large semantic gap}. There indeed exists a large semantic gap between the observed symptoms and the treatment drugs which requires professional experience from a clinician to understand. Therefore, the relation of symptoms and drug labels should not be modeled directly as it may limit the performance of the prediction model.
\end{itemize}

Our proposed model PALAS is able to address the above mentioned issues by learning a \emph{latent symptom space} that exploits the intrinsic symptom-to-drug connection.
Specifically, based on the latent symptom space, we introduce a truncated objective to jointly learn (1) the observed symptoms -to- latent symptoms connection and (2) latent symptoms -to- drug labels connection. Also, we impose a sparse regularizer as a constraint to guarantee the sparsity of drug label.
Moreover, considering the limited number of subjects in our study, we employ a transductive setting to incorporate all the valuable data to train a better model. In other words, we also use the observed features (not including labels) of the testing data in PALAS.
We solve PALAS effectively using an alternating optimization algorithm, and analyze its convergence.

Our contributions in this paper are summarized as follows:
\textbf{1)} Our work is the first attempt to combine the observed motor and non-motor symptoms to recommend/predict the suitable treatment drugs for PD patient as a data-driven reference.
\textbf{2)} To leverage various observed symptoms, we propose to use multi-modality representation that incorporates both the feature-based and similarity-based representations of the observed symptoms.
\textbf{3)} To address the large semantic gap between the observed symptoms and the prescription drugs, as well as to better capture the intrinsic symptom-to-drug relationship, we propose PALAS model that learns a latent symptom space linking the observed symptoms and the prescription drugs.

\section{Related Works}
\label{sec:relatedwork}
Recently, several works have been developed for computer-aided PD diagnosis. Hirschauer \etal \cite{Hirschauer2015} employed an enhanced probabilistic neural network that integrated various features to diagnose whether a subject belongs to PD or healthy control (HC). Oliveira \etal \cite{Oliveira15} used voxel-wise features extracted from the SPECT images for PD/HC prediction. Using MRI data after registration \cite{ADL1}, Singh \etal \cite{Singh201530} combined self-organizing map and least squares SVM for PD diagnosis. Liu \etal \cite{Liu2016} performed iterative canonical correlation analysis to analyze different brain regions through T1-weighted MR images. Adeli \etal \cite{Adeli2016206} distinguished PD and HC using MRI data with a joint feature-sample selection method. Lei \etal \cite{Lei19JBHI} proposed a multitask learning-based framework to model the relation between multiple modalities (\eg, sample, feature, and clinical scores) for PD diagnosis. Also, a deep learning-based multimodal assessment of PD was developed in \cite{Cor19JBHI}. \yh{In \cite{AD2}, a method of using machine learning tools on voice signal was introduced for Parkinson's disease detection. Also, a mobile application was developed in \cite{AD3} to detect and identify the motor disorders.}
To assess brain connectivity dynamics and investigate the connectivity way, Cai \etal \cite{Cai19JBHI} introduced the dynamic graph as a theoretical tool to model and analyze the functional connectivity in PD.
All these aforementioned methods were used to address the PD diagnostic problem, while the computer-aided prescription was rarely addressed, if it has ever been explored before.

%Also, for human-machine system design and implementation, machine learning-based methods have received considerable attention in recent years. For example, Liu \etal \cite{Liu17THMS} used temporal adjacent bag of words and discriminative neighborhood preserving-based learning model to tackle the human motion retrieval problem. Li \etal \cite{Li17THMS} investigated a machine learning-based method to automatically generate the human-machine interfaces from task models.
%Ren \etal \cite{Renfj16THMS} presented a facial expression learning method that mapped facial expressions from a human performer to a humanoid robot. Stefanos \etal \cite{Doltsinis18THMS} combined experiences from expert production engineers with reinforcement learning to formulate effective production ramp-up policies. Jin \etal \cite{Qi18THMS} devised a method that learned the complex spatial-temporal configurations of body joints to automatically recognize human activities. Therefore, machine learning has considerably enhanced the performance of human-machine systems.

Moreover, as for another representative neurodegenerative disease, Alzheimer's disease (AD), several computer-aided methods have been developed in recent years. The common goal of these methods is to learn a common space from multi-view/modal data to aid the diagnosis. For example, in \cite{zhang11}, a multi-view algorithm was introduced by automatically learning the best combination of multiple kernels.
In \cite{Zhang17tkde}, Zhang \etal proposed to use a shared tree-structured model to investigate the genetic risk factors for AD treatment.
In \cite{Jie18TIP}, a method was developed to combine individual sub-networks of brain connectivity for MCI diagnosis.
Also, Xu \etal \cite{Xu17ijcai} utilized a sparse model under low-rank constraint for cognitive assessment prediction for AD patients.
%Liu \etal \cite{ComLiuTCBB18} combined the network with six different anatomical measures to identify AD.
Kim \etal \cite{ComKimHBM18} presented a multi-modal extreme learning machine to integrate different imaging modalities.
Shi \etal \cite{Shi2014Joint}\cite{Shi2020Joint} proposed to use the coupled features and coupled boosting for multi-view data analysis for AD diagnosis.
Tong \etal \cite{ComTongPR17} analyzed complementary relation between multiple modalities and present a diagnosis model for AD.
The incomplete neuroimaging and genetic data was combined for AD diagnosis in \cite{ADL4}.
Please note that, these methods for AD focus on the binary or small-sized classification problem while our method deals with multi classification problem (\ie, number of class is 31) which is more challenging. In addition, several methods \cite{ADL2}\cite{ADL3} for brain activity analysis are also related to our works.

Furthermore, from the methodology perspective, our study is related to multi-view learning methods, \eg, co-training style algorithms \cite{Blum:1998:CLU:279943.279962}\cite{Sindhwani:2008:RML:1390156.1390279}, multi-kernel learning methods \cite{Bach:2004:MKL:1015330.1015424}\cite{Rakotomamonjy_JMLR_2008/LIDIAP}, and subspace learning methods \cite{Hardoon:2009:CAK:1487444.1487460}\cite{Quadrianto2011Learning}, which learn the intrinsic or consensus representation from different views of data (or modalities) \cite{XuTao13}.
Also, from the perspective of selecting the (near) optimal drugs, our task is related to multi-label learning where each sample is associated with a set of labels, such as in problem transformation \cite{BoutellCR20041757}\cite{Furnkranz:2008:MCV:1416930.1416934}, and algorithm adaptation \cite{Clare2002Knowledge}\cite{Elisseeff01akernel}\cite{Zhang:2007:MLL:1234417.1234635}.

Indeed, among these works, our proposed PALAS model is closest to \cite{Zhangzhongfei2012}\cite{Zou2016}, which considered multi-modality features and multiple labels simultaneously. In particular, Fang \etal presented a method \cite{Zhangzhongfei2012} to learn a multi-modality consensus representation for image classification, which imposes a maximum margin criterion to improve the generalization ability. For image annotation task, Zou \etal developed a method called MVML \cite{Zou2016} by combining multiple single-label learners in a boosting framework with the base learner as a modified multi-modality based SVM. However, the prediction performance is very limited if we directly utilize \cite{Zhangzhongfei2012}\cite{Zou2016} for our study as (1) the intrinsic symptom-to-drug connection is not considered, and (2) the sparse property of drug label matrix is ignored. We will further discuss this using our experimental results  in Section \ref{sec:experiments}.

In brief, our problem is novel as it is fundamentally different from the previous computer-aided PD diagnosis works. In addition, the related works in multi-view learning and multi-label learning could not be efficiently applied to our problem.

\begin{figure}[htbp]
\centering
\includegraphics[width=2.3in]{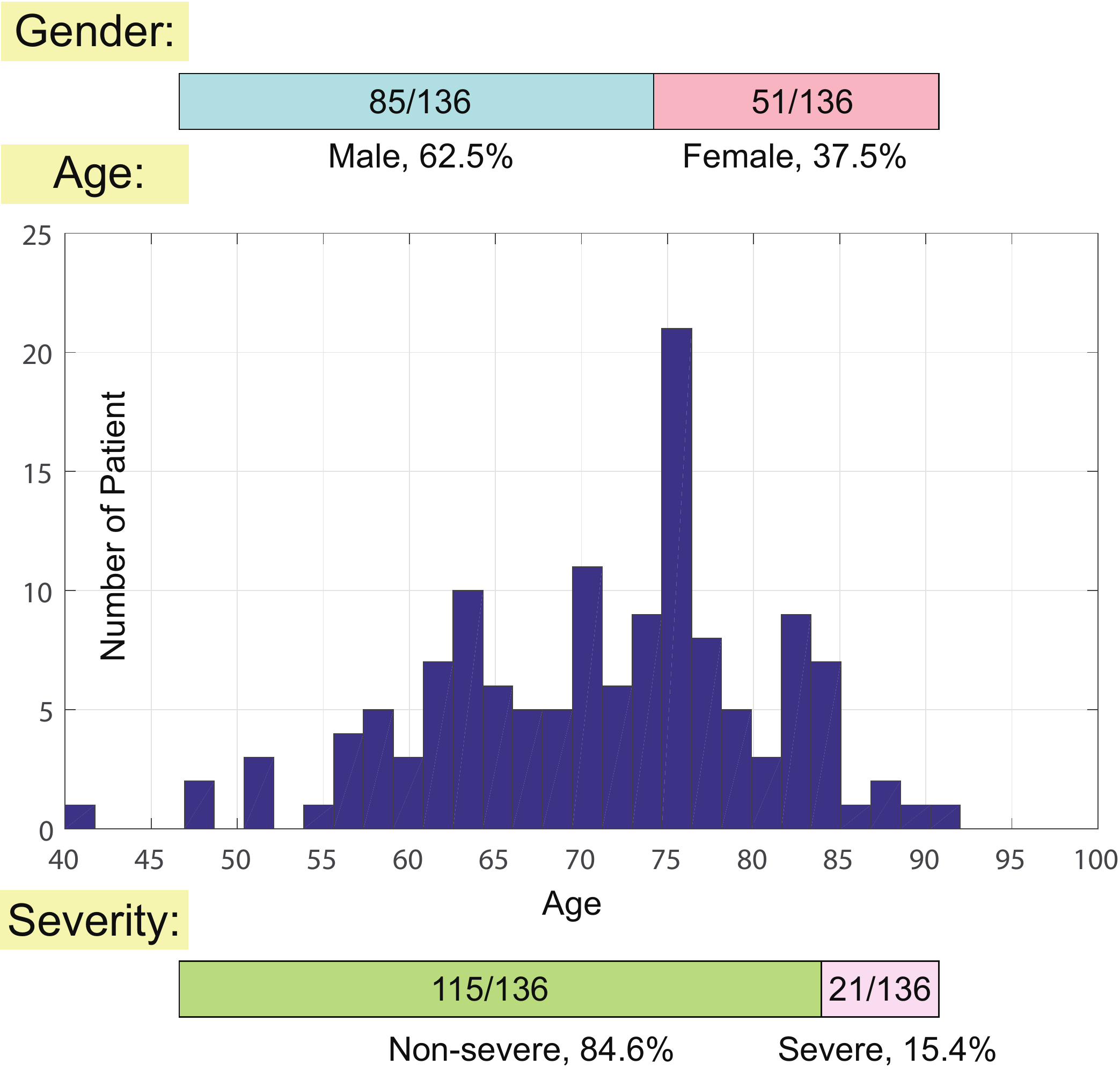}
\caption{\yh{The demographic information of all the PD patients.}}\label{fig_age}
\vspace{-0.5cm}
\end{figure}

\yh{\begin{table*}[htbp]
\setlength{\tabcolsep}{3.5pt}
\renewcommand\arraystretch{1}
\centering
\caption{\yh{The full list of all 31 drugs used in our study.}}\label{tab_alldrugs}
\yh{
\footnotesize{
\begin{tabular}{@{\extracolsep{\fill}}|cccccccc|}
\hline
ID: 1 & ID: 2 & ID: 3 & ID: 4 & ID: 5 & ID: 6 & ID: 7 & ID: 8 \\
\emph{Rasagiline} & \emph{Selegiline} & \emph{Zonisamide} & \emph{Safinamide} & \emph{Pramipexole IR/ER} & \emph{Rotigotine} & \emph{Piribedil} & \emph{Ropinirole IR/PR} \\
\hline\hline
ID: 9 & ID: 10 & ID: 11 & ID: 12 & ID: 13 & ID: 14 & ID: 15 & ID: 16  \\
\emph{Apomorphine} & \emph{Cabergoline} & \emph{Pergolide} & \emph{Entacapone} & \emph{Tolcapone} & \emph{Opicapone} & \emph{Carbidopa and Levodopa} & \emph{Istradefylline} \\
\hline\hline
ID: 17 & ID: 18 & ID: 19 & ID: 20 & ID: 21 & ID: 22 & ID: 23 & ID: 24 \\
\emph{Amantadine} & \emph{Artane} & \emph{Stilnox} &\emph{Vitamin E} & \emph{Zopiclone} & \emph{Coenzyme Q10} & \emph{Alprazolam} & \emph{Bromocriptine} \\
\hline\hline
ID: 25 & ID: 26 & ID: 27 & ID: 28 & ID: 29 & ID: 30 & ID: 31  &    \\
\emph{Memantine} & \emph{Clonazepam} & \emph{Donepezil} & \emph{Mirtazapine} & \emph{Quetiapine} & \emph{Zoloft} & \emph{Baclofen} & \\
\hline
\end{tabular}
}
}
\vspace{-0.3cm}
\end{table*}}

\section{Data Collection and Representation}
\label{sec:processing}
%In this section, we first describe our data collection, and then discuss our multi-modality feature representation, which consists of feature-based and similarity-based representations.

\subsection{Data Collection}
\yhk{We spent two years to build up a dataset of 136 PD inpatients in Nanjing Brain Hospital (NBH), China, under Ethical Approval 2015-KY-041 from the Ethics Committee of NBH, in accordance with the Principles of the Declaration of Helsinki. All of the 136 inpatients were fully informed about the research, and agreements were signed prior to the initiation of the research. Figure \ref{fig_age} summarizes the demographic information and the severity of PD symptoms of the 136 inpatients. The diagnosis and treatment of the inpatients observed the UK PD Society Brain Bank Clinical Diagnostic Criteria. In the following, we will first describe a general diagnosis and treatment procedure for any individual inpatient. Then, we will detail the medical scales, symptoms, and drugs considered in this research.}

\yhk{\textbf{General procedure.} Upon the admission of an inpatient, the symptoms are evaluated using a medical scale. The evaluation results are recorded as the symptoms matrix $\mathbf{x}$. The neurologists shall then start an initial drug therapy for the inpatient. Within the next 1 to 2 weeks, the therapy might be adjusted based on the clinical observations and feedback from the inpatient. After that, the inpatient is kept under observation with the adjusted therapy for 1 month. The adjusted therapy becomes a drug label $\mathbf{y}$ and the symptom-drug pair ($\mathbf{x}$,$\mathbf{y}$) is added to the dataset \emph{if and only if} the symptoms $\mathbf{x}$ are eventually alleviated using the drugs $\mathbf{y}$.}

\yhk{\textbf{Symptoms and the medical scale.} We adopt multiple widely used medical scales for thorough evaluations of the symptoms, including Unified PD Rating Scale (UPDRS), Non-Motor Symptoms Questionnaire (NMS-Quest), The PD Sleep Scale (PDSS), Hamilton Depression Scale (HAMD), Parkinson's Disease Questionnaire (PDQ), and Hamilton Anxiety Rating Scale (HAMA). Specifically, we consider motor symptoms such as tremor, rigidity, hand movement, leg agility, posture, gait, \etc \, as well as non-motor symptoms such as cognition, depression, anxiety, sleep/nocturnal problems, \etc \, Table \ref{tab_medicalscale} shows a full list of medical scales we used and their targets of evaluation. Table \ref{tab_medicalscale} also shows the composition of the 55 motor features and 143 non-motor features used in our dataset.}

\yhk{\textbf{Drugs.} In this research, we consider 31 drugs according to both the PD handbooks \cite{AD4}\cite{AD5} and the clinical practice in NBH, China, including including \emph{Cabergoline}, \emph{Stilnox}, \emph{Alprazolam}, \emph{Carbidopa and Levodopa}, \etc, which are shown effective in improving PD by previous studies in the field. A full list of the 31 drugs can be found in Table \ref{tab_alldrugs}. Note that some of the drugs might not be conventionally recognized as ``\emph{PD drugs}'' because they are for non-motor symptoms such as depression, anxiety, \etc}

\textbf{Motor symptoms}. The motor symptoms normally reflect whether a patient suffers from motor skills decline in daily activities (\yhk{see Table \ref{tab_medicalscale}}).
%\yhk{For motor symptoms, we used Unified PD Rating Scale (UPDRS) part III including 55 items to assess motor disability.}
Based on the clinical observation, medical measurement and necessary quality assurance, the \yh{neurologists} graded the symptoms according to a predefined numerical scale. Some of the typical motor symptoms and their corresponding numerical grading scores are described as follows:
\begin{enumerate}
\item \textbf{\emph{Finger tapping}} $\rightarrow$ \textbf{0} (\emph{no problem}), \textbf{1} (\emph{1-2 interruptions}), \textbf{2} (\emph{3-5 interruptions}), \textbf{3} (\emph{more than 5 interruptions}), \textbf{4} (\emph{cannot perform}).
\item \textbf{\emph{Arising from chairs}} $\rightarrow$ \textbf{0} (\emph{no problem}), \textbf{1} (\emph{slower than normal}), \textbf{2} (\emph{pushes self up}), \textbf{3} (\emph{needs to push off}), \textbf{4} (\emph{cannot perform}).
\item \textbf{\emph{Swallow}} $\rightarrow$ \textbf{0} (\emph{normal}), \textbf{1} (\emph{few bucking}), \textbf{2} (\emph{occasional bucking}), \textbf{3} (\emph{only soft-diet}), \textbf{4} (\emph{nasal feed}).
\item \textbf{\emph{Dystonia}} $\rightarrow$ \textbf{0} (\emph{no}), \textbf{1} (\emph{yes}).
\item \textbf{\emph{Tetany}} $\rightarrow$ \textbf{0} (\emph{no}), \textbf{1} (\emph{slightly}), \textbf{2} (\emph{moderate}), \textbf{3} (\emph{heavy}), \textbf{4} (\emph{severe}).
\item \textbf{\emph{Using tableware}} $\rightarrow$ \textbf{0} (\emph{normal}), \textbf{1} (\emph{slow but without any help}), \textbf{2} (\emph{awkward and need help}), \textbf{3} (\emph{need help}), \textbf{4} (\emph{need feeding}).
\end{enumerate}

\textbf{Non-motor symptoms}. Different from the motor symptoms, the non-motor symptoms do not describe movement, coordination, physical tasks or mobility, but record the information such as cognitive ability, mental state, and physical condition of a subject (\yhk{see Table \ref{tab_medicalscale}}).
%\yhk{In our study, we mixed several popularly-used non-motor symptoms evaluation methods, including NMS-Quest (30 items), PDSS (PD Sleep Scale, 15 items), HAMD (Hamilton Depression Scale, 24 items), PDQ (Parkinson's Disease Questionnaire, 39 items), HAMA (Hamilton Anxiety Rating Scale, 14 items), MMSE (Mini-Mental State Examination, 21 items).}
Some of the typical non-motor symptoms and their corresponding numerical grading scores are listed as below:
\begin{enumerate}
\item \textbf{\emph{Computing 5 simple equations, e.g., 3+7=?}} $\rightarrow$ \textbf{0} to \textbf{5} (\emph{according to the number of correct answers}).
\item \textbf{\emph{Daytime sleepiness}} $\rightarrow$ \textbf{0} (\emph{no daytime sleepiness}), \textbf{1} (\emph{can resist and stay awake}), \textbf{2} (\emph{fall asleep when alone and relaxing}), \textbf{3} (\emph{sometimes fall asleep}), \textbf{4} (\emph{often fall asleep}).
\item \textbf{\emph{Leg swelling}} $\rightarrow$ \textbf{0} (\emph{no}), \textbf{1} (\emph{yes}).
\item \textbf{\emph{Quality of sleeping at night}} $\rightarrow$ \textbf{0} to \textbf{10} (\emph{according to the quality assessment}).
\item \textbf{\emph{Insomnia}} $\rightarrow$ \textbf{0} (\emph{no}), \textbf{1} (\emph{slightly}), \textbf{2} (\emph{moderate}), \textbf{3} (\emph{heavy}), \textbf{4} (\emph{severe}).
\item \textbf{\emph{Self-depreciation}} $\rightarrow$ \textbf{0} (\emph{no}), \textbf{1} (\emph{tell when asked}), \textbf{2} (\emph{automatic tell}), \textbf{3} (\emph{initiative tell}), \textbf{4} (\emph{ecphronia}).
\end{enumerate}

The recorded numerical values are then regarded as the feature-based representations of the observed motor and non-motor symptoms, respectively.

\yhk{
\begin{table}[htbp]
\setlength{\tabcolsep}{3.5pt}
\renewcommand\arraystretch{1.1}
\centering
\caption{\yhk{Data features from multiple medical scales.}}\label{tab_medicalscale}
\footnotesize{
\begin{tabular}{@{\extracolsep{\fill}}|ccc|}
\hline
Medical Scale & Target of Evaluation & Data Features \\
\hline\hline
UPDRS (Part I) & \scriptsize{Mentation, behavior and mood} & 4 non-motor  \\
UPDRS (Part II)	& \scriptsize{Activities of daily living} &	9 motor, 4 non-motor \\
UPDRS (Part III) &	Motor disability & 27 motor \\
UPDRS (Part IV)	& Motor complications	& 11 motor \\
\scriptsize{Hoehn and Yahr Score} &	Disease severity	& 1 motor \\
NMS-Quest &	\scriptsize{Various non-motor symptoms}	& 30 non-motor \\
MMSE	& Cognition	& 20 non-motor \\
HAMD	& Depression	& 24 non-motor \\
HAMA	& Anxiety	& 14 non-motor \\
PDQ-39	& Quality of life &	7 motor, 32 non-motor \\
PDSS	& Sleep/nocturnal problems	& 15 non-motor \\
\hline
\end{tabular}
}
\end{table}
}

\subsection{Data Representation}

\yhk{From the above data collection procedure, we obtained a dataset of 136 PD inpatients. Each inpatient is represented as a 198-dimensional symptom vector $\mathbf{x}$, and is associated with a 31-dimensional label $\mathbf{y}$.}

To formally record this information, we use a binary matrix $\mathbf{Y}$ as the drug label matrix to denote the relationship between the patients and the drugs. In particular, $\mathbf{Y}_{ij} = 1$ if and only if the $i$-th patient has taken the $j$-th drug, and $\mathbf{Y}_{ij} = 0$ otherwise. Let $\mathbf{X}_1 \in \mathbb{R}^{(n+m)\times d_1}$ and $\mathbf{X}_2 \in \mathbb{R}^{(n+m)\times d_2}$ denote the feature-based representations of the observed motor and non-motor symptoms, respectively, where $n$ and $m$ denote the number of training and testing samples (\ie, subjects), respectively, and $d_1$ and $d_2$ denote the numbers of motor and non-motor symptoms, respectively. According to the data acquisition procedure discussed previously, $d_1 = 55$ and $d_2 = 143$.

Besides, we also devise several similarity-based representations to reflect patient-wise similarity. Particularly, we utilize various kernels, such as linear, Gaussian, Bhattacharyya, and $\chi$-square kernels to compute the similarity between any two row feature vectors in $\mathbf{X} \in \mathbb{R}^{(n+m)\times (d_1+d_2)}$, \ie, a concatenation of feature matrices
$\mathbf{X}_1$ and $\mathbf{X}_2$ (\ie, $\mathbf{X} = [\mathbf{X}_1, \mathbf{X}_2]$). The resulting similarity-based representations are then denoted as $\mathbf{X}_i \in \mathbb{R}^{(n+m)\times (n+m)}$ ($\forall i \geq 3$).

Both of the feature- and similarity-based representations are then combined as the multi-modality representation \cite{Lian2015Integrating}\cite{Shi13}\cite{Shi2014Joint}\cite{ADL5} of the observed motor and non-motor symptoms.
In general, the multi-modality representation of the observed symptoms for the PD patients is denoted as $\mathbf{X}_i \in \mathbb{R}^{(n+m)\times d_i}$ ($1\leq i \leq s$), where $d_i$ is the feature dimensionality of the $i$-th modality, and $s$ is the total number of different modalities. In our work, we set $s$ as 6 (\ie, 2 feature-based and 4 similarity-based representations).

\section{PALAS Model}
\label{sec:PALASmodel}
%In this section, we first analyze the problem and present our formulation of PALAS model. Then, we discuss the optimization algorithm in solving PALAS model, and analyze the convergence and complexity of our model.

\begin{figure}[htbp]
\centering
\includegraphics[width=2in]{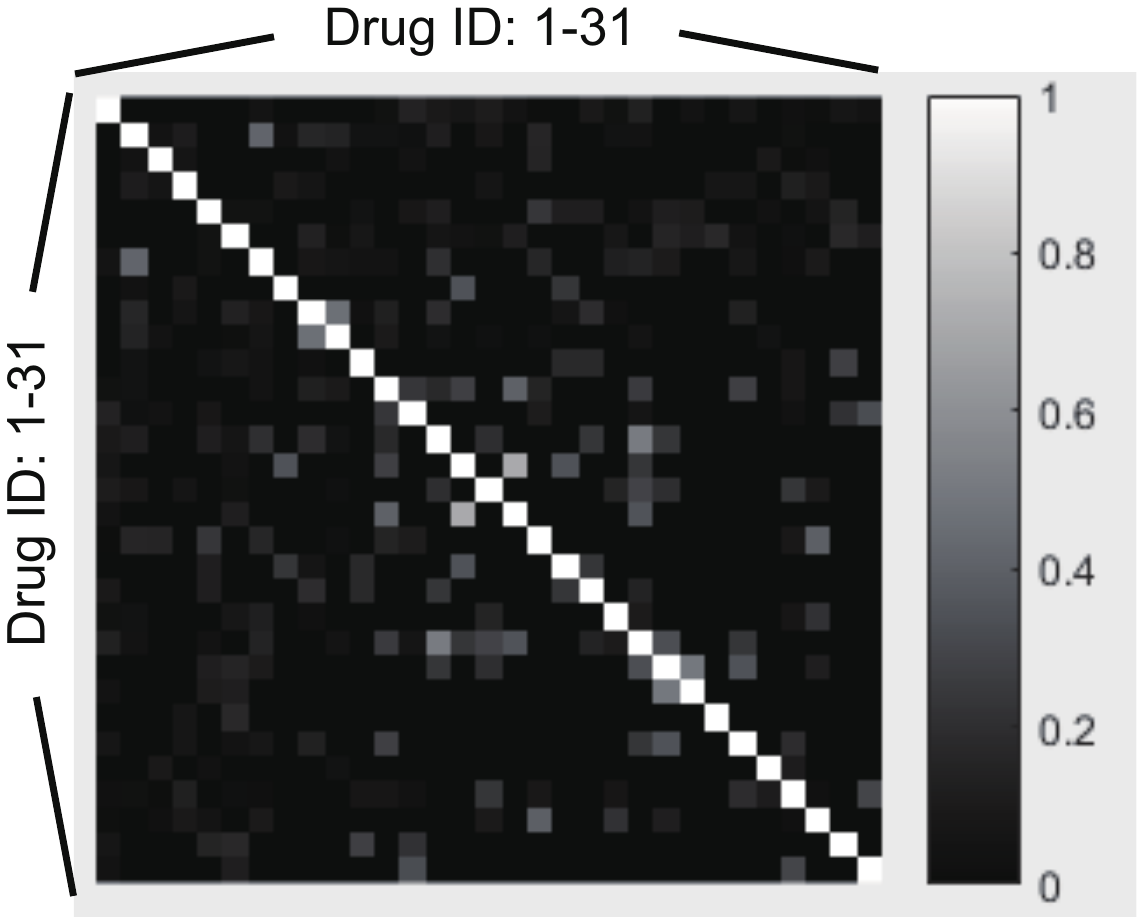}
\caption{\yh{Pearson correlation coefficient of different drugs.}}\label{exp_coefcorr}
\vspace{-0.6cm}
\end{figure}

\subsection{Problem Analysis}
To illustrate the complex relationship between the observed symptoms and treatment drugs,
we conduct an experiment by treating our prescription recommendation (prediction) task as several independent binary classification problems, and report the results in Figure \ref{exp5}.
Specifically, for each prescription drug (\ie, a column vector in the drug label matrix $\mathbf{Y}$), we train a linear SVM using the observed symptoms (\ie, the feature matrix $\mathbf{X}$ as described in Section \ref{sec:processing}) and report the average 10-fold \yh{cross-validation} classification accuracy in Figure \ref{exp5}.
\begin{figure}[htbp]
\centering
\includegraphics[width=2.6in]{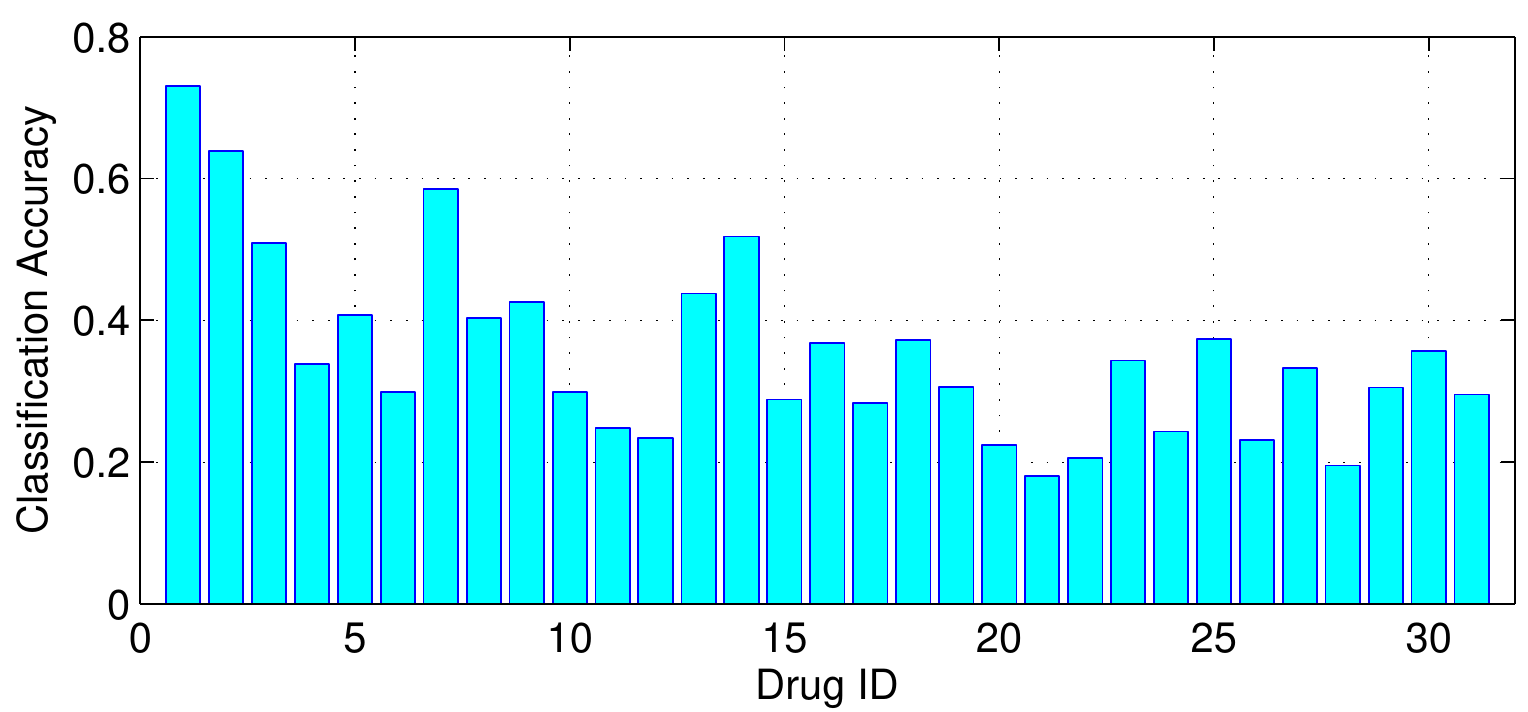}
\caption{Prediction performance of single prescription drug using only the observed symptoms $\mathbf{X}$.}\label{exp5}
\end{figure}

As shown in Figure \ref{exp5}, the classification accuracy is very low if we use only the numerical values of the observed symptoms to recommend/predict the prescription of a typical drug. Moreover, such single task prediction could not utilize the interrelation between different prescription drugs. \yh{To further understand the interrelation, the Pearson correlation coefficient is reported in Figure \ref{exp_coefcorr}. We can observe the interrelation indeed exists between different drugs.}

%Thus, it is more feasible and reasonable that we learn a latent symptom space between the observed symptoms and the drug label matrix, which could not be directly linked according to Figure \ref{exp5}.
%This latent symptom space could help exploit the intrinsic symptom-to-drug connection. Specifically, based on the latent symptom space, we in fact jointly learn (1) observed symptoms -to- latent symptoms connection and (2) latent symptoms -to- treatment drugs connection.

Thus, it is more feasible and reasonable to learn all the prediction tasks jointly (\ie, multi-label learning), and introduce a latent symptom space to exploit the intrinsic symptom-to-drug connection, so that the overall prediction performance could be improved. Specifically, by learning a latent symptom space, we in fact jointly learn two \yh{transformations}, \ie, (1) the observed symptoms to latent symptoms transformation, and (2) the latent symptoms to treatment drugs transformation.
The major considerations on learning these two transformations are given as follows:
\begin{itemize}
\item \textbf{Observed symptoms to latent symptoms transformation}. The latent symptom space is required to fully incorporate information from different modalities (\ie, $\mathbf{X}_i$) and should be able to well represent all the modalities in multi-modality representation. In other words, the learned latent symptom representation should have relatively small $F$-norm distance with the transformation of each individual modality.
\item \textbf{Latent symptoms to treatment drugs transformation}. The transformation matrix that maps the latent symptom representation to the drug labels should be sparse, as each prescription drug can only treat a limited number of symptoms.
\end{itemize}

\subsection{Problem Formulation}
Given the multi-modality representation $\mathbf{X}_i \in \mathbb{R}^{(n+m)\times d_i}$ ($i=1,...,6$ for 6 modalities) of all the samples, and the ground truth drug label matrix $\mathbf{\bar{Y}} \in \mathbb{B}^{n \times c}$ of the $n$ training samples ($\mathbb{B} = \{0,1\}$), where $c$ is the total number of drugs ($c=31$ in our work), our aim is to recommend/predict the drug labels for the $m$ test samples.
For dimension consistency, we introduce $\mathbf{Y} \in \mathbb{B}^{(n+m) \times c} = [ \mathbf{\bar{Y}}; \mathbf{0} ]$ as the (expanded) drug label matrix, in which the last $m$ rows of $\mathbf{Y}$ are set to zeros since the labels of the testing samples are not available during the training stage %(\ie, only the observed features of the training samples are used).

We accomplish this prediction task by proposing PALAS model, in which a latent symptom representation $\mathbf{P}\in \mathbb{R}^{(n+m)\times k}$ is learnt to connect the observed symptoms with the prescription drugs, where $k$ indicates the number of latent symptoms. The main objective function of PALAS is mathematically defined as follows:
%\begin{equation}
%\begin{split}
%\argmin_{\mathbf{U}_i,\mathbf{P},\mathbf{V}} \mathcal{F}(\mathbf{U}_i,\mathbf{P},\mathbf{V}) & =
%\sum_{i=1}^s \Big(\|\mathbf{X}_i \mathbf{U}_i - \mathbf{P}\|_\text{F}^2 + \alpha\|\mathbf{U}_i\|_\text{F}^2 \Big) \\ + & \|\mathbf{J}(\mathbf{Y}-\mathbf{PV})\|_\text{F}^2  + \beta\|\mathbf{V}\|_1,
%\label{eq:obj}
%\end{split}
%\end{equation}
\begin{equation}
\begin{split}
\argmin_{\mathbf{U}_i,\mathbf{P},\mathbf{V}} \quad \mathcal{F}(& \mathbf{U}_i, \mathbf{P}, \mathbf{V}) = \\
&
\sum_{i=1}^s \Big(\|\mathbf{X}_i \mathbf{U}_i - \mathbf{P}\|_\text{F}^2 + \alpha\|\mathbf{U}_i\|_\text{F}^2 \Big) \\ + & \|\mathbf{J}(\mathbf{Y}-\mathbf{PV})\|_\text{F}^2  + \beta\|\mathbf{V}\|_1,
\label{eq:obj}
\end{split}
\end{equation}
where $\mathbf{U}_i \in \mathbb{R}^{d_i \times k}$ is the observed-to-latent symptom transformation matrix to ensure that the latent symptom space $\mathbf{P}$ is able to represent the $i$-th modality (\ie, $\mathbf{X}_i$). $\mathbf{V} \in \mathbb{R}^{k \times c}$ is the symptom-to-drug transformation matrix to link different latent symptoms to different drugs. $\mathbf{V}$ is imposed to be sparse, since a typical drug can only relieve a very small number of symptoms. Also, in clinical perspective, it is rare to have a patient that should take all the possible drugs. $\mathbf{J} \in \mathbb{B}^{(n+m)\times(n+m)}$ is a binary diagonal matrix denoted as
\begin{equation}
\mathbf{J} = \text{diag}\Big(\overbrace{1,...,1}^n,\overbrace{0,...,0}^m\Big),
\end{equation}
which is used to indicate the training samples in a transductive setting, so that only the training samples are taking part in the prediction loss (\ie, second term in Eq. (\ref{eq:obj})). In this way, although the testing samples do not contribute to the drug label prediction, they are incorporated to help learning $\mathbf{P}$ and $\mathbf{U}_i$, thus addressing the issue of limited samples in this study. $\alpha$ and $\beta$ are the regularization parameters to balance different terms in Eq. (\ref{eq:obj}). Note that, the transductive setting belongs to a typical semi-supervised learning where the semi-supervised setting has been recently adopted in drug prediction or association \cite{Ding19JBHI}\cite{Liu20JBHI}.

\subsection{Optimization}
We employ alternating optimization strategy \cite{Shi15TPAMI} to solve Eq. (\ref{eq:obj}) as all of its variables are jointly convex.

%\subsubsection{Solving $\mathbf{U}_i$}
\textbf{Update $\mathbf{U}_i$ with $\mathbf{P}$ and $\mathbf{V}$ fixed.} We denote $\mathbf{U}_i^{t+1}$
as the transformation matrix for the $i$-th modality at the ($t$+1)-th iteration, which can be updated from $\mathbf{U}_i^t$ (at the $t$-th iteration) by utilizing the gradient descent method as follows:
\begin{equation}
\mathbf{U}_i^{t+1} \leftarrow \mathbf{U}_i^{t} - \rho\Big(\mathbf{X}_i^\top \mathbf{X}_i\mathbf{U}_i^t-\mathbf{X}_i^\top\mathbf{P} + \alpha\mathbf{U}_i^t\Big),
\label{eq:solvu}
\end{equation}
where $\rho$ is the stepsize automatically determined by the back-tracking line search \cite{Shi15TPAMI}.

%\subsubsection{Solving $\mathbf{V}$}
\textbf{Update $\mathbf{V}$ with $\mathbf{U}_i$ and $\mathbf{P}$ fixed.} We can obtain the following function $\mathcal{H}(\mathbf{V})$ by fixing $\mathbf{U}_i$ and $\mathbf{P}$:
\begin{equation}
\mathcal{H}(\mathbf{V}) = \|\mathbf{J}(\mathbf{Y}-\mathbf{PV})\|_\text{F}^2  + \beta\|\mathbf{V}\|_1,
\label{eq:objvp}
\end{equation}
which is a combination of a smooth term and a $L_1$-norm. We employ FISTA (Fast Iterative Soft-Thresholding Algorithm) \cite{Beck:2009:FIS:1658360.1658364} for solving $\mathbf{V}$ in Eq. (\ref{eq:objvp}). We summarize the optimization steps as follows:
\begin{equation}
\begin{split}
\textbf{V}^{t'} & \leftarrow\mathcal{G}_{\epsilon}\Big(\mathbf{\Gamma}^{t'}-\frac{1}{l_{t'}}\nabla\mathcal{H}(\mathbf{\Gamma}^{t'})\Big), \\
\psi_{t'+1} & \leftarrow\frac{1+\sqrt{1+4\psi_{t'}^2}}{2}, \\
\mathbf{\Gamma}^{t'+1} & \leftarrow\textbf{V}^{t'}+\Big(\frac{\psi_{t'}-1}{\psi_{t'+1}}\Big)\Big(\textbf{V}^{t'}-\textbf{V}^{t'-1}\Big),
\end{split}
\label{eq:solvv}
\end{equation}
where $t'$ is used to indicate the inner iteration during solving $\mathbf{V}$ which is different from outer iteration $t$ in the main objective. $\mathbf{\Gamma} \in \mathbb{R}^{k\times c}$ is an intermediate variable initialized as $\mathbf{\Gamma}^1 = \mathbf{V}^0$ ($\mathbf{V}^0$ is the initialization before performing FISTA). $\psi_{t'} \in \mathbb{R}$ is a variable initialized as $\psi_1=1$. $l_{t'}$ is the stepsize automatically determined by the back-tracking line search.
Meanwhile, $\nabla\mathcal{H}(\mathbf{\Gamma}^{t'})$ is the gradient at the $t'$-th iteration, given as
\begin{equation}
\nabla\mathcal{H}(\mathbf{\Gamma}^{t'}) = -\mathbf{P}^{\top}\mathbf{J}\Big(\mathbf{Y}-\mathbf{P} \mathbf{\Gamma}^{t'}\Big).
\end{equation}

In Eq. (\ref{eq:solvv}), $\mathcal{G}_{\epsilon}$ is the shrinkage function, given as
\begin{equation}
\mathcal{G}_{\epsilon}(x) = \left\{\begin{array}{ll}
x-\epsilon & \text{if} \; x > \epsilon,\\
x+\epsilon & \text{if} \; x < \epsilon,\\
0 & \text{otherwise}
\end{array}\right.
\end{equation}

Thus, with fixed $\mathbf{U}_i$ and $\mathbf{P}$, we can eventually obtain the updated $\mathbf{V}$ by repeating the steps in Eq. (\ref{eq:solvv}) until convergence.

%\rev{[Question: is the iteration $t$ in Eq.5 is specific to the FISTA that solve Eq.4, or the iteration that solves for the main obj function? Based on the description, I think t in Eq5 should be different with the main obj. Please edit accordingly.]}

%\subsubsection{Solving $\mathbf{P}$}
\textbf{Update $\mathbf{P}$ with $\mathbf{U}_i$ and $\mathbf{V}$ fixed.} To update $\mathbf{P}$, we employ gradient descent with the following updating rule:
\begin{equation}
\mathbf{P}^{t+1} \leftarrow \mathbf{P}^t - \rho\Big(\mathbf{J}(\mathbf{Y}-\mathbf{P}^t\mathbf{V})\mathbf{V}^\top + \sum_{i=1}^s(\mathbf{X}_i\mathbf{U}_i-\mathbf{P}^t)\Big).
\label{eq:solvp}
\end{equation}

Finally, we summarized all the optimization steps for PALAS model in Algorithm \ref{alg:PALAS}. For initialization, $\mathbf{U}_i$ and $\mathbf{V}$ are set to
\begin{equation}
\mathbf{U}_i = \frac{\mathbf{1}}{d_i k},
\end{equation}
and
\begin{equation}
\mathbf{V} = \frac{\mathbf{1}}{k c},
\end{equation}
where $\mathbf{1}$ is the all-one matrix/vector. In addition, we perform unsupervised PCA on $\mathbf{X}$ and reduce the original dimension (\ie, $d_1+d_2$) to $k$, as the initialization for $\mathbf{P}$. We then repeat Eq. (\ref{eq:solvu}), Eq. (\ref{eq:solvv}), and Eq. (\ref{eq:solvp}) to update $\mathbf{U}_i$, $\mathbf{V}$, and $\mathbf{P}$, respectively, until convergence.

\textbf{Test Sample Prediction}: Given a testing sample $\mathbf{z}$ represented by $\{\mathbf{z}_i \in \mathbb{R}^{d_i}\}$ ($1\leq i \leq s$), the predicted drug labels $\mathbf{\hat{y}} \in \mathbb{R}^{c}$ can be obtained by
\begin{equation}
\mathbf{\hat{y}} = \text{sgn}\Big(\frac{1}{s}\sum_{i=1}^s \mathbf{z}_i\mathbf{U}_i\mathbf{V}\ - \frac{\mathbf{1}}{2} \Big),
\end{equation}
where $\mathbf{1}$ is all-one vector. %\rev{[Comment: the summation in eq.11 has to be properly scaled to make sense, \ie, (1/s)sum(ziUiV)]}

\subsection{Convergence Analysis}

\begin{lemma}
\cite{Beck:2009:FIS:1658360.1658364}
Let $\mathbf{V}^*$ be the optimal solution of Eq. (\ref{eq:objvp}) and $\eta$ be the Lipschitz constant, $\forall t\geq 1$,
\begin{displaymath}
\mathcal{H}(\mathbf{V}^t)-\mathcal{H}(\mathbf{V}^*) \leq \frac{2l_t \eta\|\mathbf{V}^0-\mathbf{V}^*\|^2}{(1+t)^2}.
\end{displaymath}
\end{lemma}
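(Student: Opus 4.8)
The plan is to recognize Eq.~(\ref{eq:objvp}) as an instance of the composite convex model $\mathcal{H}(\mathbf{V}) = f(\mathbf{V}) + g(\mathbf{V})$ analyzed in \cite{Beck:2009:FIS:1658360.1658364}, with smooth part $f(\mathbf{V}) = \|\mathbf{J}(\mathbf{Y}-\mathbf{PV})\|_\text{F}^2$ and nonsmooth part $g(\mathbf{V}) = \beta\|\mathbf{V}\|_1$ (here $\mathbf{P}$, $\mathbf{Y}$, $\mathbf{J}$ are frozen), and then to invoke --- or reproduce --- the FISTA convergence argument. First I would verify the hypotheses: since $\mathbf{J}^2 = \mathbf{J}$, the Hessian of $f$ equals $2\,\mathbf{P}^\top\mathbf{J}\mathbf{P}\succeq 0$, so $f$ is convex and $\nabla f(\mathbf{V}) = 2\,\mathbf{P}^\top\mathbf{J}(\mathbf{PV}-\mathbf{Y})$ is Lipschitz with modulus $\eta = 2\,\lambda_{\max}(\mathbf{P}^\top\mathbf{J}\mathbf{P})$; $g$ is proper and convex, and its proximal operator for step $1/l_{t'}$ is precisely the shrinkage map $\mathcal{G}_\epsilon$ with $\epsilon=\beta/l_{t'}$ appearing in Eq.~(\ref{eq:solvv}); and a minimizer $\mathbf{V}^*$ exists because $\mathcal{H}$ is coercive. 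Consequently the iteration (\ref{eq:solvv}) is exactly FISTA with backtracking applied to $\mathcal{H}$, and the back-tracking search terminates with $l_{t'}\le\gamma\eta$ for a fixed $\gamma>1$.

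For the rate, the workhorse is the one-step \emph{sufficient-decrease} inequality for the proximal-gradient map $\mathcal{P}_l(\mathbf{\Gamma}) = \mathcal{G}_{\beta/l}\big(\mathbf{\Gamma}-\tfrac1l\nabla f(\mathbf{\Gamma})\big)$: for every $l\ge\eta$ and every $\mathbf{W}$,
\[
\mathcal{H}(\mathbf{W}) - \mathcal{H}\big(\mathcal{P}_l(\mathbf{\Gamma})\big) \;\ge\; \tfrac{l}{2}\big\|\mathcal{P}_l(\mathbf{\Gamma})-\mathbf{\Gamma}\big\|_\text{F}^2 + l\,\big\langle \mathbf{\Gamma}-\mathbf{W},\,\mathcal{P}_l(\mathbf{\Gamma})-\mathbf{\Gamma}\big\rangle,
\]
which follows from the descent lemma for $f$ along the segment $[\mathbf{\Gamma},\mathcal{P}_l(\mathbf{\Gamma})]$, convexity of $f$ and $g$, and the first-order optimality condition of the prox step. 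I would apply this at the extrapolated point $\mathbf{\Gamma}^{t'}$ twice, once with $\mathbf{W}=\mathbf{V}^{t'-1}$ and once with $\mathbf{W}=\mathbf{V}^*$, and take the convex combination weighted by the momentum parameters $\psi_{t'}$; this produces an inequality coupling $v_{t'}:=\mathcal{H}(\mathbf{V}^{t'})-\mathcal{H}(\mathbf{V}^*)$, $v_{t'-1}$, and the differences of consecutive iterates.

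The delicate step --- which I expect to be the main obstacle --- is to compress this coupled inequality into a monotone Lyapunov quantity. Setting $\mathbf{u}_{t'} := \psi_{t'}\mathbf{V}^{t'-1}-(\psi_{t'}-1)\mathbf{V}^{t'-2}-\mathbf{V}^*$ and using the identity $\psi_{t'+1}^2-\psi_{t'+1}=\psi_{t'}^2$ that the update $\psi_{t'+1}=(1+\sqrt{1+4\psi_{t'}^2})/2$ is designed to enforce, one checks that the quadratic and cross terms recombine into a perfect square $\|\mathbf{u}_{t'+1}\|_\text{F}^2$, so that $\Phi_{t'} := \tfrac{2}{l_{t'}}\psi_{t'}^2 v_{t'} + \|\mathbf{u}_{t'}\|_\text{F}^2$ is non-increasing in $t'$ (the monotonicity of $l_{t'}$ produced by backtracking is also used here). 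Since $\psi_1=1$ and $\mathbf{\Gamma}^1=\mathbf{V}^0$, this gives $\tfrac{2}{l_{t'}}\psi_{t'}^2 v_{t'}\le\Phi_1=\|\mathbf{V}^0-\mathbf{V}^*\|_\text{F}^2$. An elementary induction then yields $\psi_{t'}\ge(t'+1)/2$, and combining the two bounds gives $v_{t'}\le \tfrac{2\,l_{t'}\,\|\mathbf{V}^0-\mathbf{V}^*\|_\text{F}^2}{(1+t')^2}$; rewriting $l_{t'}$ through the backtracking estimate $l_{t'}\le\gamma\eta$ and simplifying the constant recovers the displayed inequality, a routine bookkeeping carried out in full in \cite{Beck:2009:FIS:1658360.1658364}.
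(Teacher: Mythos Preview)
Your proposal is correct and, in fact, goes further than the paper: the paper does not prove this lemma at all but simply cites it as the convergence rate of FISTA from \cite{Beck:2009:FIS:1658360.1658364}, whereas you have faithfully reproduced the Beck--Teboulle argument (verifying the composite-model hypotheses for $f(\mathbf{V})=\|\mathbf{J}(\mathbf{Y}-\mathbf{PV})\|_\text{F}^2$ and $g(\mathbf{V})=\beta\|\mathbf{V}\|_1$, the sufficient-decrease inequality, the Lyapunov telescoping via $\psi_{t'+1}^2-\psi_{t'+1}=\psi_{t'}^2$, and the bound $\psi_{t'}\ge(t'+1)/2$). Since the paper's ``proof'' is just the citation, your approach is the same source carried out in detail.
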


\begin{theorem}
The optimization process (\ie, Algorithm \ref{alg:PALAS}) monotonically decreases the objective of Eq. (\ref{eq:obj}) in each iteration and will converge to a local optimal solution.
\end{theorem}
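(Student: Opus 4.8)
The plan is to prove the two claims in turn: first that $\mathcal{F}$ is nonincreasing along Algorithm~\ref{alg:PALAS}, and then that the generated sequence has a limit point which is a stationary (local-optimal) point of Eq.~(\ref{eq:obj}). For the monotone descent I would treat one outer iteration as the composition of the three block updates in the order $\mathbf{U}_i$ (Eq.~(\ref{eq:solvu})), $\mathbf{V}$ (Eq.~(\ref{eq:solvv})), $\mathbf{P}$ (Eq.~(\ref{eq:solvp})), and show that none of them increases $\mathcal{F}$. With $\mathbf{P},\mathbf{V}$ fixed, the restriction of $\mathcal{F}$ to $\mathbf{U}_i$ is a smooth strongly convex quadratic with Lipschitz gradient, so the gradient step with step size $\rho$ chosen by backtracking line search meets the Armijo sufficient-decrease condition and hence (weakly) decreases the objective; the same argument applies verbatim to the $\mathbf{P}$-update, whose restricted objective is again a convex quadratic. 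With $\mathbf{U}_i,\mathbf{P}$ fixed, the restriction is exactly $\mathcal{H}(\mathbf{V})$ of Eq.~(\ref{eq:objvp}), the sum of a smooth convex quadratic and the convex term $\beta\|\mathbf{V}\|_1$; FISTA is a descent-type scheme whose iterates drive $\mathcal{H}$ towards its global minimum, with the rate quantified by the preceding Lemma, so the returned $\mathbf{V}$ satisfies $\mathcal{H}(\mathbf{V}^{\mathrm{new}})\le\mathcal{H}(\mathbf{V}^{\mathrm{old}})$ and $\mathcal{F}$ again does not increase. Chaining the three inequalities gives $\mathcal{F}(\mathbf{U}_i^{t+1},\mathbf{P}^{t+1},\mathbf{V}^{t+1})\le\mathcal{F}(\mathbf{U}_i^{t},\mathbf{P}^{t},\mathbf{V}^{t})$.

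Since every term of $\mathcal{F}$ is a (squared) norm, $\mathcal{F}\ge 0$; a nonincreasing sequence bounded below converges, so the objective values have a finite limit. To upgrade this to convergence of the iterates I would first show the iterates stay in a bounded set: for all $t$, $\mathcal{F}(\mathbf{U}_i^{t},\mathbf{P}^{t},\mathbf{V}^{t})\le\mathcal{F}^0$, so the Tikhonov terms force $\alpha\|\mathbf{U}_i^t\|_\text{F}^2\le\mathcal{F}^0$ and bound each $\mathbf{U}_i^t$, then $\sum_i\|\mathbf{X}_i\mathbf{U}_i^t-\mathbf{P}^t\|_\text{F}^2\le\mathcal{F}^0$ (using $s\ge 1$) bounds $\mathbf{P}^t$, and $\beta\|\mathbf{V}^t\|_1\le\mathcal{F}^0$ bounds $\mathbf{V}^t$. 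By Bolzano--Weierstrass there is a convergent subsequence with limit $(\mathbf{U}_i^\ast,\mathbf{P}^\ast,\mathbf{V}^\ast)$. Because the only nonsmooth part, $\beta\|\mathbf{V}\|_1$, sits entirely in the single block $\mathbf{V}$ and each block subproblem is convex, the standard analysis of block coordinate descent for objectives of the form ``smooth plus block-separable convex'' applies: passing the first-order optimality conditions of the three subproblems to the limit along the subsequence yields $\mathbf{0}\in\partial_{\mathbf{U}_i}\mathcal{F}$, $\nabla_{\mathbf{P}}\mathcal{F}=\mathbf{0}$ and $\mathbf{0}\in\partial_{\mathbf{V}}\mathcal{F}$ at $(\mathbf{U}_i^\ast,\mathbf{P}^\ast,\mathbf{V}^\ast)$, \ie, the limit is a coordinatewise minimizer and therefore a stationary, local-optimal point of Eq.~(\ref{eq:obj}). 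I would also remark that $\mathcal{F}$ is block-convex but not jointly convex, since the bilinear coupling $\mathbf{P}\mathbf{V}$ in the prediction loss destroys joint convexity; this is precisely why only local optimality can be asserted.

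The step I expect to be the main obstacle is the last one, turning ``the objective value converges'' into ``the iterates converge to a stationary point.'' This requires (i) the boundedness argument above so that limit points exist at all, and (ii) a careful limit passage in the optimality conditions in the presence of the subdifferential of $\|\cdot\|_1$ and of the inner FISTA loop, which in practice is truncated to finitely many iterations: one must either assume the inner problem is solved exactly or control its suboptimality via the preceding Lemma and argue that the residual tends to zero (and is summable) so that it does not affect the limiting stationarity conditions. The interplay between the inner and outer iterations and the nonsmooth subdifferential is where the real care is needed; the per-iteration descent itself is comparatively routine.
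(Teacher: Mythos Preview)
Your monotone-descent argument is essentially identical to the paper's: chain the three block updates in the order $\mathbf{U}_i\to\mathbf{V}\to\mathbf{P}$ and observe that each one does not increase $\mathcal{F}$, invoking backtracking gradient descent for the smooth quadratic blocks and Lemma~1 for the $\mathbf{V}$-block. The paper's proof stops there, concluding with ``Therefore, the convergence could be satisfied,'' and does not attempt anything like your boundedness argument, subsequence extraction, or limit passage in the optimality conditions. So on the second claim you are supplying substantially more than the paper does; the paper treats ``monotone decrease of a bounded-below sequence'' as tantamount to convergence to a local optimum, whereas you correctly separate convergence of \emph{values} from convergence of \emph{iterates} and identify exactly where the real work lies.

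One point worth flagging, which you partly touch on: neither the paper nor your sketch fully justifies the middle inequality. Lemma~1 is a \emph{rate} bound for FISTA, not a descent guarantee, and FISTA is well known to be non-monotone along its inner iterations. The inequality $\mathcal{H}(\mathbf{V}^{\text{new}})\le\mathcal{H}(\mathbf{V}^{\text{old}})$ is safe only if the inner loop is run to (near) convergence so that $\mathbf{V}^{\text{new}}$ is (approximately) the global minimizer of the convex $\mathcal{H}$; with a fixed finite number of inner iterations it can fail. You explicitly list this as the main obstacle, which is the right diagnosis; the paper simply cites Lemma~1 without addressing it. Your remark that $\mathcal{F}$ is block-convex but not jointly convex (because of the bilinear term $\mathbf{P}\mathbf{V}$) is also correct and sharper than the paper's own statement that ``all of its variables are jointly convex.''
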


\begin{proof}
By performing Algorithm \ref{alg:PALAS}, we obtain the updated $\mathbf{U}_i^{t+1}$, $\mathbf{P}^{t+1}$, $\mathbf{V}^{t+1}$ from $\mathbf{U}_i^{t}$, $\mathbf{P}^{t}$, $\mathbf{V}^{t}$ in each iteration, respectively. Thus, performing Eq. (\ref{eq:solvu}) and Eq. (\ref{eq:solvp}) and incorporating \textbf{Lemma 1}, we get
\begin{equation}
\begin{split}
\forall t\geq 1, \mathcal{F} (\mathbf{U}_i^t,\mathbf{P}^t,\mathbf{V}^t) \quad \overbrace{\geq}^{\text{Eq. (\ref{eq:solvu})}} \quad &  \mathcal{F}(\mathbf{U}_i^{t+1},\mathbf{P}^t,\mathbf{V}^t) \\
\quad \overbrace{\geq}^{\text{Lemma 1}} \quad & \mathcal{F}(\mathbf{U}_i^{t+1},\mathbf{P}^t,\mathbf{V}^{t+1}) \\
\quad \overbrace{\geq}^{\text{Eq. (\ref{eq:solvp})}} \quad & \mathcal{F}(\mathbf{U}_i^{t+1},\mathbf{P}^{t+1},\mathbf{V}^{t+1}).
\end{split}
\end{equation}
Therefore, the convergence could be satisfied. $\hfill\ensuremath{\blacksquare}$
\end{proof}

  %\begin{wrapfigure}{L}{0.45\textwidth}
    %\begin{minipage}{0.45\textwidth}
      \begin{algorithm}[H]
        \caption{Optimization of PALAS}\label{alg:PALAS}
        \renewcommand{\algorithmicrequire}{\textbf{Input:}}
        \renewcommand{\algorithmicensure}{\textbf{Output:}}
        \begin{algorithmic}
        \REQUIRE $\forall i\,\mathbf{X}_i$, $\mathbf{Y}$
        \STATE \textbf{Initialization}:
        %\begin{eqnarray}
        \STATE $\forall i\,\mathbf{U}_i = \frac{\mathbf{1}}{d_i k}$, $\mathbf{V} = \frac{\mathbf{1}}{k c}$
        \STATE $\mathbf{P} \leftarrow$ Performing PCA on $\mathbf{X}$
        %\end{equarray}
        \WHILE{not converged}
        %\FOR{$i=1,...,s$}
        \STATE$\forall i \,\mathbf{U}_i \leftarrow$ Update by Eq. (\ref{eq:solvu}).
        %\ENDFOR
        \STATE$\mathbf{V} \leftarrow$ Update by Eq. (\ref{eq:solvv}).
        \STATE$\mathbf{P} \leftarrow$ Update by Eq. (\ref{eq:solvp}).
        \ENDWHILE
        \ENSURE $\forall i\,\mathbf{U}_i$, $\mathbf{V}$, $\mathbf{P}$
        \end{algorithmic}
      \end{algorithm}
    %\end{minipage}
  %\end{wrapfigure}

\subsection{Complexity}
To analyze the complexity of our optimization algorithm, for each outer iteration, we denote $T_1$ as the number of inner iteration for solving $\mathbf{V}$ by repeating Eq. (\ref{eq:solvv}). Also, we denote $\hat{d} = \sum_{i=1}^s d_i$, $\bar{d} = \sum_{i=1}^s d_i^2$.
Thus, for solving all $\mathbf{U}_i$ ($1\leq i \leq s$), the complexity is $\mathcal{O}(\bar{d}k+ \bar{d}n + \hat{d}nk)$; For solving $\mathbf{V}$, the complexity is $\mathcal{O}(n^2kT_1 + nkcT_1)$; For solving $\mathbf{P}$, the complexity is $\mathcal{O}(n^2 c + nkc + \hat{d}nk)$. We denote the number of outer iterations as $T$. In summary, as FISTA is fast to converge (\ie, $T_1\ll T$), the major complexity of our optimization is $\mathcal{O}(n^2cT + \hat{d}nkT + n^2kT_1T + nkcT_1T)$.

\section{Results}
\label{sec:experiments}
%We now report the quantitative and qualitative results of our method. Firstly, we present the details of our experimental setting, the evaluation metrics, and the baselines for comparison. Then, we show the comparison with the related works. Finally, additional results are reported for better understanding of our method.
\begin{figure*}[htbp]
\centering
\includegraphics[width=1.6in]{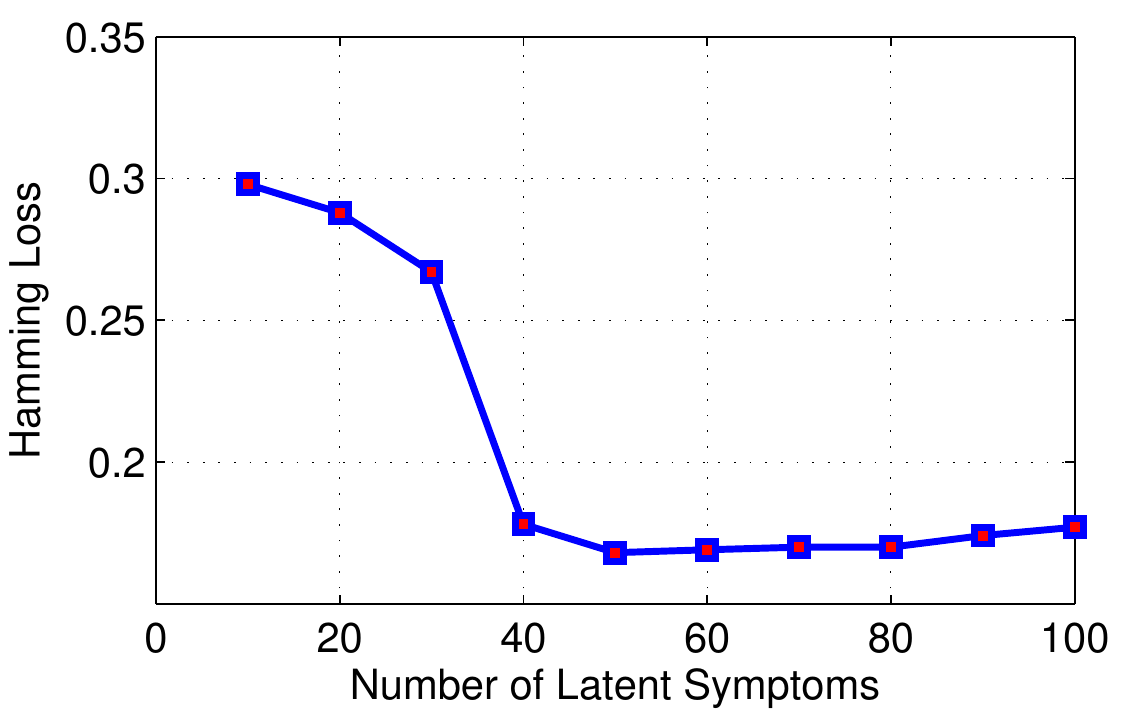}
%\hspace{0.8cm}
\includegraphics[width=1.6in]{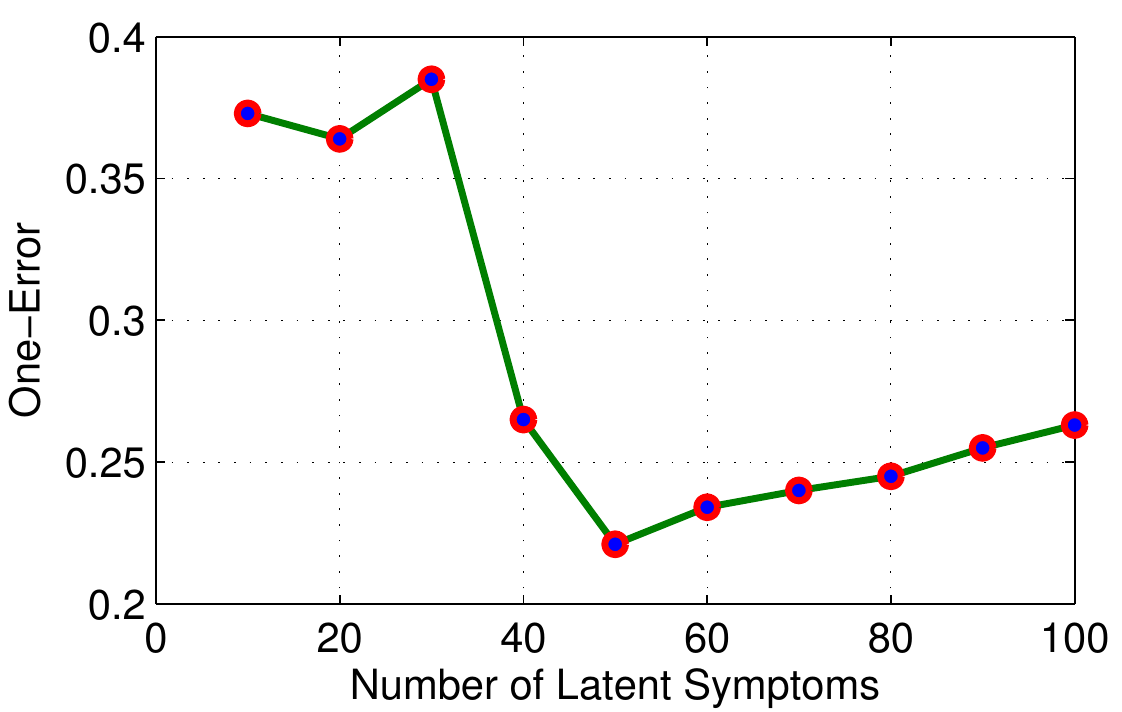}
%\vspace{0.6cm}
\includegraphics[width=1.6in]{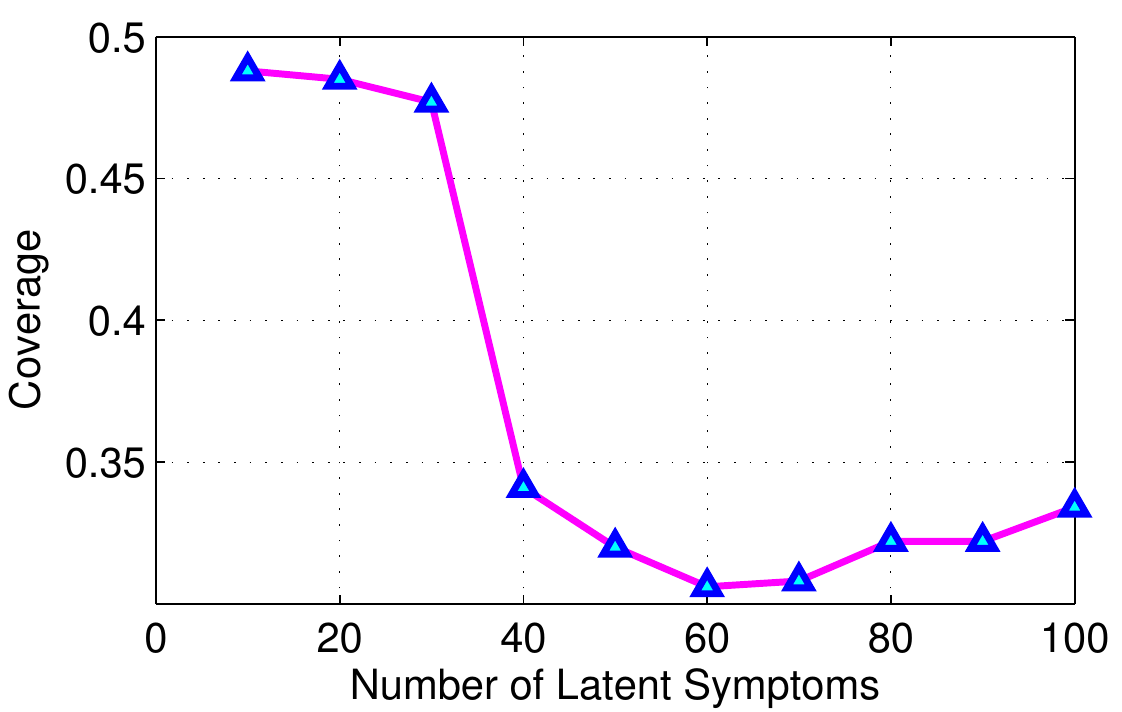}
%\hspace{0.8cm}
\includegraphics[width=1.6in]{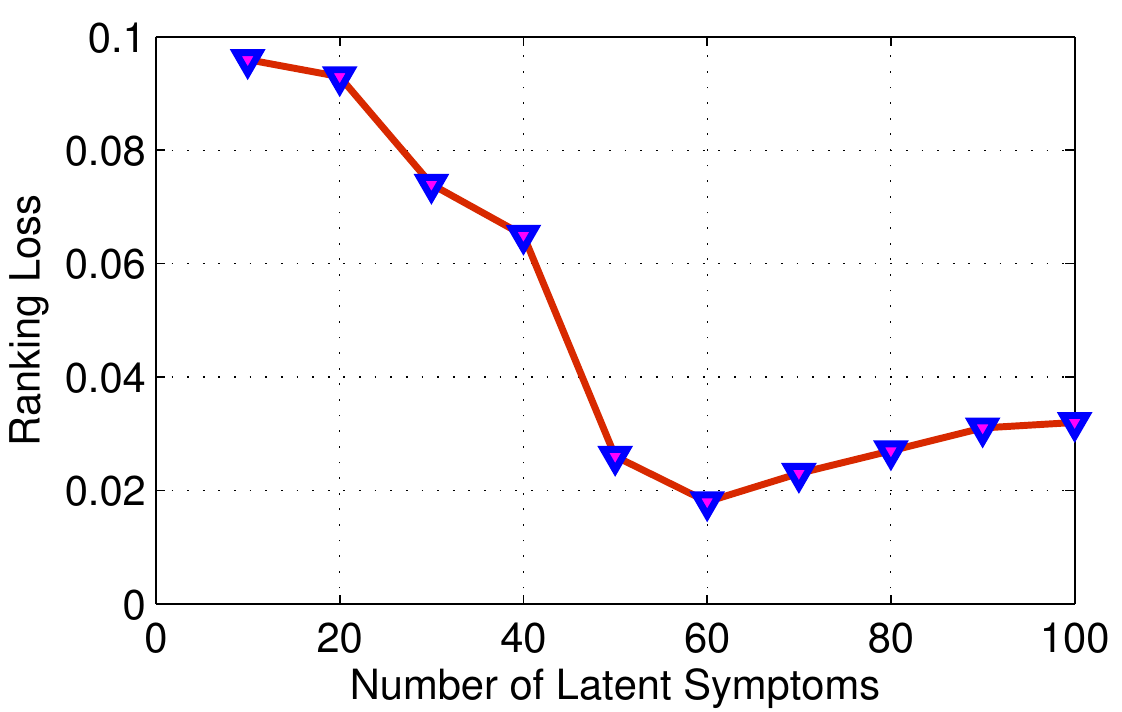}
\caption{The prediction performance of the PALAS model using different $k$ values.}\label{exp3}
%\vspace{-0.5cm}
\end{figure*}

\subsection{Experimental Setting}
\label{sec:experiments:A}
%\textbf{\fbox{Setting, Metrics, Baselines}}
\textbf{Evaluation Metrics.}
As our goal is to recommend/predict the best subsets from all the treatment drugs, \yh{this belongs to a conventional multi-label learning setting to predict the labels of 31 drugs.} Inspired by multi-label learning \cite{Zhang:2007:MLL:1234417.1234635}\cite{Elisseeff01akernel}\cite{Clare2002Knowledge}, we employ the \texttt{Hamming loss}, \texttt{one-error}, \texttt{coverage}, and \texttt{ranking loss} as our evaluation metrics. All of them are the lower the better.
\yh{Moreover, we report the mean accuracy, mean sensitivity and mean specificity on different drugs for evaluation. All of them are the higher the better.}
Beside these seven metrics, we also use clinical metrics provided by three \yh{neurologists}, which we will clarify shortly.

We implemented our method in a PC with Intel i7-core 3.7GHz and 32GB RAM. In addition, we also performed 100 times of 10-fold \yh{cross-validation} (\ie, 14 samples in the 1-st to 9-th folds, and 10 samples in the 10-th fold), and averaged the obtained values as the final results, to avoid possible data-split bias.

\textbf{Parameter Setting.}
For choosing the appropriate parameters, we randomly choose the 90$\%$ data as training set to investigate the different combinations of parameters according to their performance on the rest 10$\%$ data as testing set, which aims to automatically determine the values of three parameters in PALAS, \ie, $\alpha$, $\beta$, and the number of latent symptoms $k$. In particular, $\alpha$ and $\beta$ are chosen from 15 different values (\ie, $1$, $0.5$, $0.3$, $0.1$, $0.05$, $0.01$, $0.005$, $0.001$, $5\times10^{-4}$, $10^{-4}$, $5\times10^{-5}$, $10^{-5}$, $10^{-6}$, $10^{-8}$, $10^{-10}$), while $k$ is chosen from $10$, $20$, ..., $100$. The combination of $\alpha$, $\beta$ and $k$ that achieves lowest value in \texttt{Hamming loss} on this chosen testing set is selected as the final parameters in PALAS.

According to this parameter choosing process, the combination of $\alpha=0.3$, $\beta=0.1$, and $k=50$ is found to be best. Thus, all the following results are reported based on this parameter setting.
%\rev{[Comment: the best setting based on what data? First 10 fold CV?]}

%\begin{table*}[htbp]
%\setlength{\tabcolsep}{3.5pt}
%\renewcommand\arraystretch{1.5}
%\centering
%\caption{Comparison with related methods on our task.}\label{exp1}
%\scriptsize{
%\begin{tabular}{@{\extracolsep{\fill}}|c|ccccccccc|}
%\hline
%Method & BR & CLR & ML-$k$NN & Rank-SVM & SMBR & MLMVL-MM & MVML & PALAS ($k$=50) & PALAS ($k$=60) \\
%\hline\hline
%\texttt{Hamming loss}\scriptsize{$\downarrow$} & 0.210 $\pm$ 0.024 & 0.195 $\pm$ 0.021 & 0.175 $\pm$ 0.016 & 0.184 $\pm$ 0.026 & 0.241 $\pm$ 0.015 & 0.173 $\pm$ 0.012 & 0.195 $\pm$ 0.028 & \textbf{0.168 $\pm$ 0.017} & 0.169 $\pm$ 0.017 \\
%\texttt{One-error}\scriptsize{$\downarrow$} & 0.283 $\pm$ 0.127 & 0.264 $\pm$ 0.115 & 0.250 $\pm$ 0.107 & 0.260 $\pm$ 0.125 & 0.292 $\pm$ 0.095 & 0.231 $\pm$ 0.093 & 0.271 $\pm$ 0.133 & \textbf{0.221 $\pm$ 0.110} & 0.240 $\pm$ 0.109 \\
%\texttt{Coverage}\scriptsize{$\downarrow$} & 0.406 $\pm$ 0.078 & 0.417 $\pm$ 0.063 & 0.364 $\pm$ 0.070 & 0.365 $\pm$ 0.067 & 0.336 $\pm$ 0.051 & 0.325 $\pm$ 0.063 & 0.341 $\pm$ 0.081 & 0.320 $\pm$ 0.065 &\textbf{0.308 $\pm$ 0.065} \\
%\texttt{Ranking loss}\scriptsize{$\downarrow$} & 0.033 $\pm$ 0.003 & 0.033 $\pm$ 0.001 & \textbf{0.023 $\pm$ 0.003} & 0.028 $\pm$ 0.006 & 0.046 $\pm$ 0.008 & 0.028 $\pm$ 0.004 & 0.031 $\pm$ 0.008 & 0.026 $\pm$ 0.006 & \textbf{0.023 $\pm$ 0.006} \\
%\hline
%\end{tabular}
%}
%\end{table*}

\begin{table*}[htbp]
\setlength{\tabcolsep}{3.5pt}
\renewcommand\arraystretch{1.1}
\centering
\caption{Performance of the prescription drug prediction between our proposed method (\ie, PALAS) and the related baseline methods. \yh{$\downarrow$ indicates the lower the better, and $\uparrow$ indicates the higher the better.}}\label{exp1}
\footnotesize{
\begin{tabular}{@{\extracolsep{\fill}}|c|cccccc|}
\hline
Method & FC & BR & CLR & ML-$k$NN & Rank-SVM & SMBR \\
\hline\hline
\texttt{Hamming loss}\scriptsize{$\downarrow$} & 0.574 $\pm$ 0.032 & 0.210 $\pm$ 0.024 & 0.195 $\pm$ 0.021 & 0.175 $\pm$ 0.016 & 0.184 $\pm$ 0.026 & 0.241 $\pm$ 0.015 \\
\texttt{One-error}\scriptsize{$\downarrow$} & 0.536 $\pm$ 0.140 & 0.283 $\pm$ 0.127 & 0.264 $\pm$ 0.115 & 0.250 $\pm$ 0.107 & 0.260 $\pm$ 0.125 & 0.292 $\pm$ 0.095 \\
\texttt{Coverage}\scriptsize{$\downarrow$} & 0.655 $\pm$ 0.095 & 0.406 $\pm$ 0.078 & 0.417 $\pm$ 0.063 & 0.364 $\pm$ 0.070 & 0.365 $\pm$ 0.067 & 0.336 $\pm$ 0.051 \\
\texttt{Ranking loss}\scriptsize{$\downarrow$} & 0.102 $\pm$ 0.007 & 0.033 $\pm$ 0.003 & 0.033 $\pm$ 0.001 & \textbf{0.023 $\pm$ 0.003} & 0.028 $\pm$ 0.006 & 0.046 $\pm$ 0.008 \\
\yh{\texttt{Sensitivity} \scriptsize{$\uparrow$}} & \yh{0.657 $\pm$ 0.004} & \yh{0.794 $\pm$ 0.002} & \yh{0.785 $\pm$ 0.002} & \yh{0.823 $\pm$ 0.002} & \yh{0.817 $\pm$ 0.002} & \yh{0.774 $\pm$ 0.003} \\
\yh{\texttt{Specificity} \scriptsize{$\uparrow$}} & \yh{0.934 $\pm$ 0.002} & \yh{0.970 $\pm$ 0.002} & \yh{0.998 $\pm$ 0.001} & \yh{0.996 $\pm$ 0.001} & \yh{0.965 $\pm$ 0.001} & \yh{0.998 $\pm$ 0.001} \\
\yh{\texttt{Accuracy} \scriptsize{$\uparrow$}} & \yh{0.884 $\pm$ 0.005} & \yh{0.943 $\pm$ 0.003} & \yh{0.947 $\pm$ 0.002} & \yh{0.969 $\pm$ 0.003} & \yh{0.953 $\pm$ 0.003} & \yh{0.942 $\pm$ 0.003} \\
\hline
\hline
Method & MLMVL-MM & MVML & PALAS ($k$=50) & PALAS$_\texttt{wt}$ ($k$=50) & PALAS ($k$=60) & PALAS$_\texttt{tv}$ ($k$=50) \\
\hline\hline
\texttt{Hamming loss}\scriptsize{$\downarrow$} & 0.173 $\pm$ 0.012 & 0.195 $\pm$ 0.028 & \textbf{0.168 $\pm$ 0.017} & 0.169 $\pm$ 0.020 & 0.169 $\pm$ 0.017 & 0.172 $\pm$ 0.017 \\
\texttt{One-error}\scriptsize{$\downarrow$} & 0.231 $\pm$ 0.093 & 0.271 $\pm$ 0.133 & \textbf{0.221 $\pm$ 0.110} & 0.226 $\pm$ 0.121 & 0.240 $\pm$ 0.109 & 0.228 $\pm$ 0.142 \\
\texttt{Coverage}\scriptsize{$\downarrow$} & 0.325 $\pm$ 0.063 & 0.341 $\pm$ 0.081 & 0.320 $\pm$ 0.065 & 0.324 $\pm$ 0.073 &\textbf{0.308 $\pm$ 0.065} & 0.324 $\pm$ 0.088\\
\texttt{Ranking loss}\scriptsize{$\downarrow$} & 0.028 $\pm$ 0.004 & 0.031 $\pm$ 0.008 & 0.026 $\pm$ 0.006 & 0.026 $\pm$ 0.008 & \textbf{0.023 $\pm$ 0.006} & 0.026 $\pm$ 0.006  \\
\yh{\texttt{Sensitivity} \scriptsize{$\uparrow$}} & \yh{0.827 $\pm$ 0.003} & \yh{0.793 $\pm$ 0.003} & \yh{\textbf{0.841 $\pm$ 0.002}} & \yh{0.839 $\pm$ 0.002} & \yh{0.837 $\pm$ 0.002} & \yh{0.825 $\pm$ 0.002} \\
\yh{\texttt{Specificity} \scriptsize{$\uparrow$}} & \yh{0.998 $\pm$ 0.001} & \yh{0.984 $\pm$ 0.002} & \yh{\textbf{0.999 $\pm$ 0.000}} & \yh{0.998 $\pm$ 0.001} & \yh{\textbf{0.999 $\pm$ 0.001}} & \yh{0.997 $\pm$ 0.001} \\
\yh{\texttt{Accuracy} \scriptsize{$\uparrow$}} & \yh{0.976 $\pm$ 0.002} & \yh{0.945 $\pm$ 0.003} & \yh{\textbf{0.983 $\pm$ 0.002}} & \yh{0.979 $\pm$ 0.002} & \yh{0.982 $\pm$ 0.002} & \yh{0.978 $\pm$ 0.003} \\
\hline
\end{tabular}
}
\end{table*}

\begin{figure*}
\centering
\begin{subfigure}[t]{3in}
\centering
\includegraphics[angle=270,width=3in]{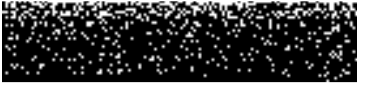}
\vspace{-0.2cm}
\caption{Prediction: $\beta$=$10^{-5}$}\label{exp4:1}
\end{subfigure}
\begin{subfigure}[t]{3in}
\centering
\includegraphics[angle=270,width=3in]{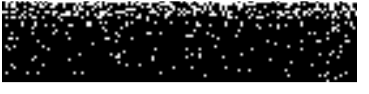}
\vspace{-0.2cm}
\caption{Prediction: $\beta$=$10^{-3}$}\label{exp4:2}
\end{subfigure}\\
\vspace{0.2cm}
\begin{subfigure}[t]{3in}
\centering
\includegraphics[angle=270,width=3in]{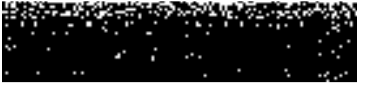}
\vspace{-0.2cm}
\caption{Prediction: $\beta$=$0.1$}\label{exp4:3}
\end{subfigure}
\begin{subfigure}[t]{3in}
\centering
\includegraphics[angle=270,width=3in]{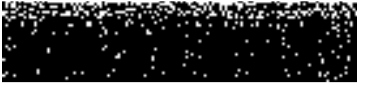}
\vspace{-0.2cm}
\caption{Ground Truth}\label{exp4:4}
\end{subfigure}%
\caption{The role of $\beta$. From top left to bottom right correspond to the predicted drug label matrices with $\beta$=$10^{-5}$, $\beta$=$10^{-3}$, and $\beta$=$0.1$, and the ground truth label matrix, respectively. Rows indicate different drugs, and columns indicate different patients. White dot means the corresponding patient takes the corresponding drug, while black dot means not. Larger $\beta$ value will resulting to sparser predicted results.}\label{exp4}
%\vspace{-0.5cm}
\end{figure*}

\textbf{Comparison Baselines.}
We compare our method with the following baseline algorithms that were developed to deal with multi-modality data and (or) multiple labels.
\begin{itemize}
\item Feature Concatenation (FC): treating our problem as several independent binary classification problems solved via linear SVMs trained using only the observed symptoms (\ie, as discussed in Section \ref{sec:PALASmodel}).
\item Sparse Multimodal Biometrics Recognition (SMBR) \cite{6529074}: a joint sparse representation for multi-modality data.
\item Binary Relevance (BR) \cite{BoutellCR20041757}: learning several independent binary classifiers for multi-label learning.
\item Calibrated Label Ranking (CLR) \cite{Furnkranz:2008:MCV:1416930.1416934}: treating multi-label learning task as a typical label ranking problem.
\item Rank-SVM \cite{Elisseeff01akernel}: maximizing the margin for multi-label data by incorporating both the relevant and the irrelevant labels.
\item ML-$k$NN \cite{Zhang:2007:MLL:1234417.1234635}: a multi-label version of traditional kNN by maximizing the posterior probability.
\item MLMVL-MM \cite{Zhangzhongfei2012}: learning a multi-modality consensus representation by imposing maximum margin criterion.
\item MVML \cite{Zou2016}: combining multiple single-label learners in a boosting framework, with a modified multi-modality SVM used as the base learner.
\end{itemize}

%introduce 4 different (single-view) multi-label learning methods as well as 2 multi-view multi-label learning methods listed as follows:
%\begin{itemize}
%\item Binary Relevance (BR) \cite{BoutellCR20041757}: A simple first-order method that builds several independent binary classifiers for multi-label learning task.
%\item Calibrated Label Ranking (CLR) \cite{Furnkranz:2008:MCV:1416930.1416934}: A second order learning method that treats multi-label learning task as a typical label ranking problem.
%\item Rank-SVM \cite{Elisseeff01akernel}: Maximizes the margin for multi-label data by incorporating both the relevant and the irrelevant labels.
%\item ML-kNN \cite{Zhang:2007:MLL:1234417.1234635}: A simple yet effective multi-label version of traditional kNN with maximizing the posterior probability for classification.
%\item MLMVL-MM \cite{Zhangzhongfei2012}: A method that learns a multi-view intrinsic consensus representation by imposing maximum margin criterion.
%\item MVML \cite{Zou2016}: Combing multiple single-label learners in a boosting framework, with the base learner as a modified multi-view SVM.
%\end{itemize}

We use the publicly available implementations for SMBR\footnote{http://users.umiacs.umd.edu/~sshekha/robustmulti.html}, BR\footnote{http://mulan.sourceforge.net/doc/}, CLR\footnote{http://mulan.sourceforge.net/doc/}\cite{Tsoumakas10}, ML-$k$NN\footnote{http://cse.seu.edu.cn/PersonalPage/zhangml/files/ML-kNN.rar}, and Rank-SVM\footnote{http://cse.seu.edu.cn/PersonalPage/zhangml/files/RankSVM.rar} in this study. For MLMVL-MM and MVML, as their implementations are not publicly available, we implement them ourselves. All the parameters in these baselines are automatically determined by the same aforementioned process. For multi-label learning methods (\ie, BR, CLR, ML-$k$NN, Rank-SVM), we combine the observed motor and non-motor symptoms (two modalities) into a feature matrix ($\mathbf{X}$ in Section \ref{sec:processing}) during the learning process.

Please also note that, we did not include the similarity-based representations for these multi-label learning methods, as their results are inferior to the results that only use the feature-based representation. Furthermore, since SMBR is specifically designed for single label prediction, we perform SMBR for each drug separately, and combine all the predictions as final results.

\subsection{Comparison with Baselines}
%\textbf{Comparison with Baselines:}
Table \ref{exp1} shows the prediction performance of our method (PALAS) and the baseline methods.
Note that, we choose the PALAS ($k$=50) in Table \ref{exp1} as our final model which is used as the reference for all the comparison in the following evaluation (including the pair-wise $t$-test).
From the table, it can be seen that our PALAS model (\ie, PALAS ($k$=50) in Table \ref{exp1}) outperforms all the baseline methods in all the four evaluation metrics, except for the \texttt{Ranking loss} metric, where the ML-$k$NN performs as well as ours.

Besides, we also conduct the pair-wise $t$-test between the results from PALAS and the results from all the baseline methods. The statistical test shows that the PALAS model is statistically better than all the comparison methods (except ML-$k$NN, if using the \texttt{Ranking loss} evaluation metric).
Also, we introduce the setting of our method directly trained on the original two-view features (\ie, motor and non-motor), and name this setting as PALAS$_\texttt{tv}$ ($k$=50) in Table \ref{exp1}. \yh{The statistical comparison on \texttt{Hamming loss} was conducted and $p$-values were reported in Table \ref{exp_sup_pvalue}, which demonstrated that our PALAS was statistically better than these baselines.}

Please note that we also report the results of PALAS when the latent symptoms number equals to 60 for reference (\ie, PALAS ($k$=60) in Table \ref{exp1}).

%\rev{[Comment: should move this last paragraph to previous subsection, when the Table is first discussed, and edit the text accordingly.]}
In addition, we degenerate our PALAS model (\ie, PALAS ($k$=50) in Table \ref{exp1}) to its non-transductive version, namely PALAS$_\texttt{wt}$ (\ie, PALAS$_\texttt{wt}$ ($k$=50) in Table \ref{exp1}). This can be realized by removing $\mathbf{J}$ in Eq. (\ref{eq:obj}) and only using the training samples to learn $\mathbf{P}$. The comparison results reported in Table \ref{exp1} show that \yh{a slight performance improvement} can be achieved under the transductive setting.
From the table, we also note that, without the transductive setting, the standard deviation of PALAS$_\texttt{wt}$ is generally slightly higher than that of PALAS, implying that the transductive setting has more stable performance.

%\rev{[Comment: 1) What is the meaning of $PALAS_{wt}$ in the table? it is not explained. 2) there are several PALAS results in Table 1. You should clearly explain them. Which PALAS model is your final model? Which PALAS model is used as reference in the t-test against other comparison methods? ]}

\yh{
\begin{table}[htbp]
\vspace{-0.3cm}
\setlength{\tabcolsep}{3.5pt}
\renewcommand\arraystretch{1}
\centering
\caption{\yh{The $p$-values of PALAS against other methods on \texttt{Hamming loss}.}}\label{exp_sup_pvalue}
\footnotesize{
\begin{tabular}{@{\extracolsep{\fill}}|c|cc|}
\hline
\footnotesize{PALAS against Method} & $p$-values & \footnotesize{Is significantly better?} \\
\hline\hline
FC & $6.21 \times 10^{-15}$ & $\checkmark$  \\
BR & $3.74 \times 10^{-13}$ & $\checkmark$  \\
CLR & $2.85 \times 10^{-4}$ & $\checkmark$  \\
ML-$k$NN & $7.73 \times 10^{-3}$ & $\checkmark$  \\
Rank-SVM & $9.61 \times 10^{-4}$ & $\checkmark$  \\
SMBR & $9.47 \times 10^{-9}$ & $\checkmark$  \\
MLMVL-MM & $7.44 \times 10^{-3}$ & $\checkmark$  \\
MVML & $3.65 \times 10^{-4}$ & $\checkmark$  \\
\hline
\end{tabular}
}
\end{table}
}

Compared with MLMVL-MM and PALAS (\ie, PALAS ($k$=50) in Table \ref{exp1}), MVML ignores the relation among different drugs as it directly models the (multiple label) learning task as a combination of several single-label learning problems. The result in Table \ref{exp1} shows that, the relation of multiple labels can be well used for performance improvement for MLMVL-MM and PALAS. Moreover, by comparing with (single-modality) multiple label learning methods, we can infer that multi-modality setting of learning the intrinsic representation across different modalities could help guide better results. \yh{Finally, the confusion matrix of all 31 drugs is provided in Figure \ref{fig_confmat}.}

% \begin{figure}[htbp]
% \centering
% \includegraphics[angle=270,width=1.67in]{figs/exp_4_1.pdf}
% \includegraphics[angle=270,width=1.67in]{figs/exp_4_2.pdf}
% \includegraphics[angle=270,width=1.67in]{figs/exp_4_3.pdf}
% \includegraphics[angle=270,width=1.67in]{figs/exp_4_4.pdf}
% \caption{The role of $\beta$. From top left to bottom right correspond to the predicted drug labels with $\beta$=$10^{-5}$, $\beta$=$10^{-3}$, $\beta$=$0.1$, and the ground truth label, respectively. Rows indicate different drugs, and columns indicate different patients. White dot means the corresponding patient takes the corresponding drug, while black dot means not.}\label{exp4}
% \end{figure}

\subsection{Sensitivity Analysis}
%\textbf{More Results:}
For a better understanding of our proposed PALAS model, taking PALAS ($k$=50) in Table \ref{exp1} as an example, we investigate how the parameter setting and the tranductive component affect its performance.
First, we investigate the influence of the parameter $k$, \ie, the dimension of the latent space. Figure \ref{exp3} shows the model performance by varying different number of $k$ values, from 10 to 100.
Here, $\alpha$ and $\beta$ are automatically determined by the given $k$ which follows the same parameter selection process as that in Section \ref{sec:experiments:A}: the combination of $\alpha$ and $\beta$ that achieves lowest value in \texttt{Hamming loss} on a randomly-chosen testing set (by randomly sampling 10$\%$ data from the dataset) is selected.

From Figure \ref{exp3}, it is observed that the setting $k=50$ leads to better \texttt{Hamming loss} and \texttt{One-error}, while the setting $k=60$ leads to better \texttt{Ranking loss} and \texttt{Coverage}. In addition, we can see that small value of $k$ will result to poor performance, which is understandable as that latent space cannot fully capture the relationship between the observed symptoms and the prescription drugs. On the other hand, setting the value of $k$ too large will also decline the performance, probably due to the noise and overfitting issues that start to kick in for high dimensional data.

Next, we investigate the influence of the parameter $\beta$, which is used to control the sparsity of the symptom-to-drug transformation matrix $\mathbf{V}$ in Eq. (\ref{eq:obj}). This sparsity constraint is reasonable as most patients would only take a very few from all the candidates of treatment drugs, instead of being prescribed all kinds of treatment drugs, as shown in Figure \ref{exp4}(d).
Thus, we visually investigate the final predicted drug label matrix by using different values of $\beta$, as shown in Figure \ref{exp4}. From the figure, we can observe that the larger the value of $\beta$, the sparser the predicted label matrix, in which at the $\beta = 0.1$, the sparse predicted label matrix (Figure \ref{exp4}(c)) closely resembles the ground truth label matrix (Figure \ref{exp4}(d)). Thus, imposing sparsity constraint on $\mathbf{V}$ will likely benefit the predicted results.

Furthermore, to investigate the sensitivity of the PALAS model to the weight parameters (\ie, $\alpha$ and $\beta$), we show the performance of the PALAS model using different combinations of $\alpha$ and $\beta$ in Figure \ref{exp6}. Please note that, for simplicity, the values 1 to 15 in the x and y axes actually denote the corresponding indices of the aforementioned 15 values (\ie, $1$, $0.5$, $0.3$, $0.1$, $0.05$, $0.01$, $0.005$, $0.001$, $5\times10^{-4}$, $10^{-4}$, $5\times10^{-5}$, $10^{-5}$, $10^{-6}$, $10^{-8}$, $10^{-10}$).

\begin{figure*}[htbp]
\centering
\includegraphics[width=6.4in]{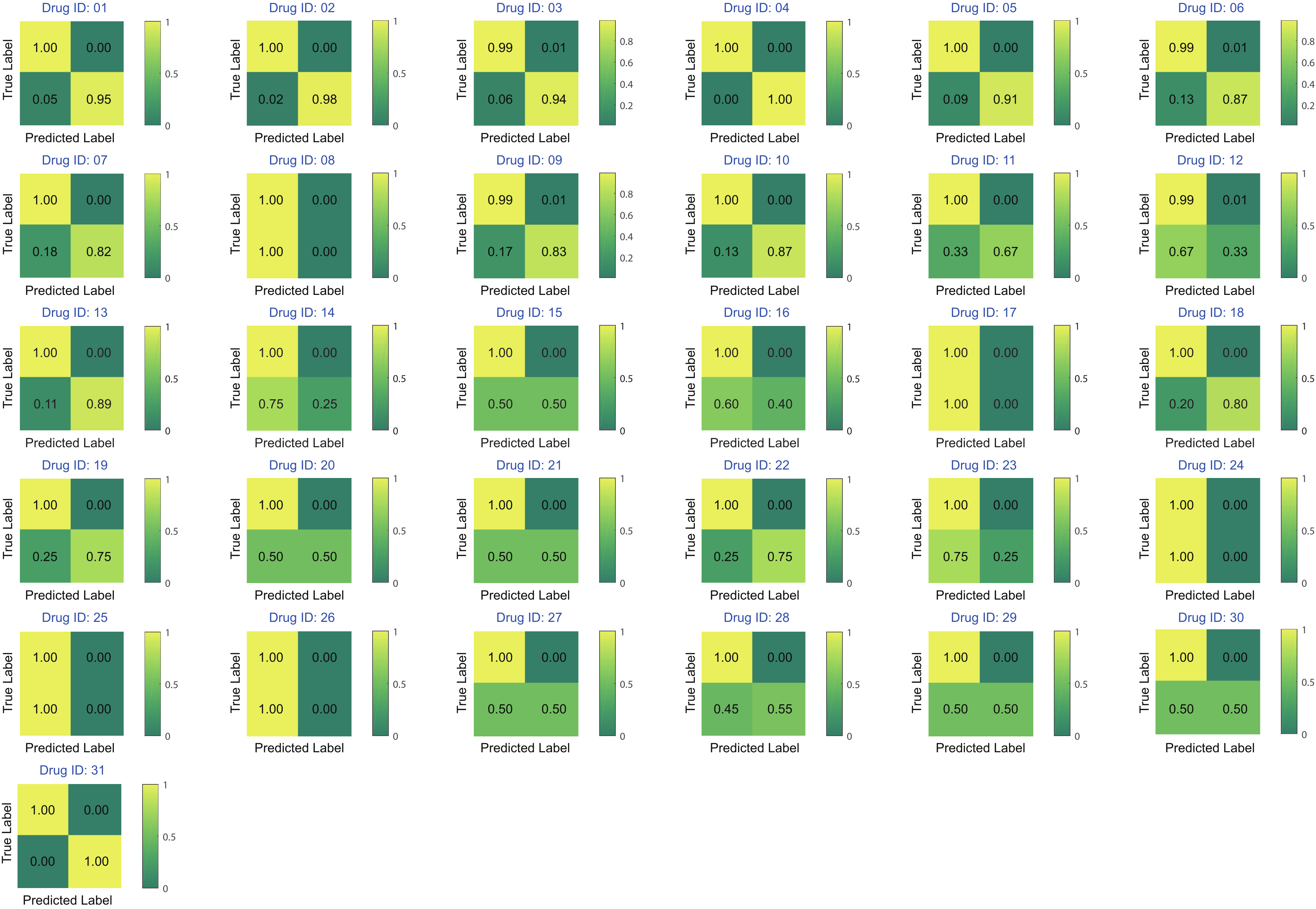}
\caption{\yh{The confusion matrix of different prescription drug.}}\label{fig_confmat}
\vspace{-0.3cm}
\end{figure*}

\begin{figure*}[htbp]
\centering
\begin{subfigure}[t]{1.7in}
\centering
\includegraphics[width=1.7in]{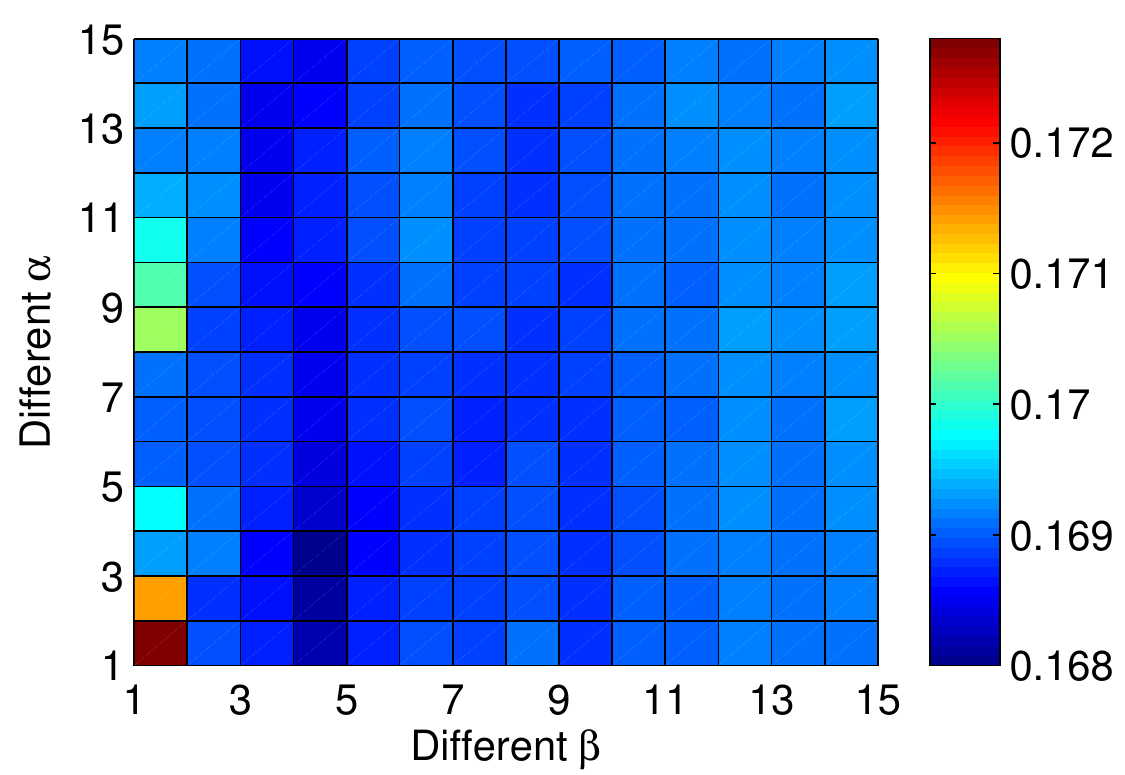}
\caption{\texttt{Hamming loss}}
\end{subfigure}%
%\hspace{1cm}
\begin{subfigure}[t]{1.7in}
\centering
\includegraphics[width=1.7in]{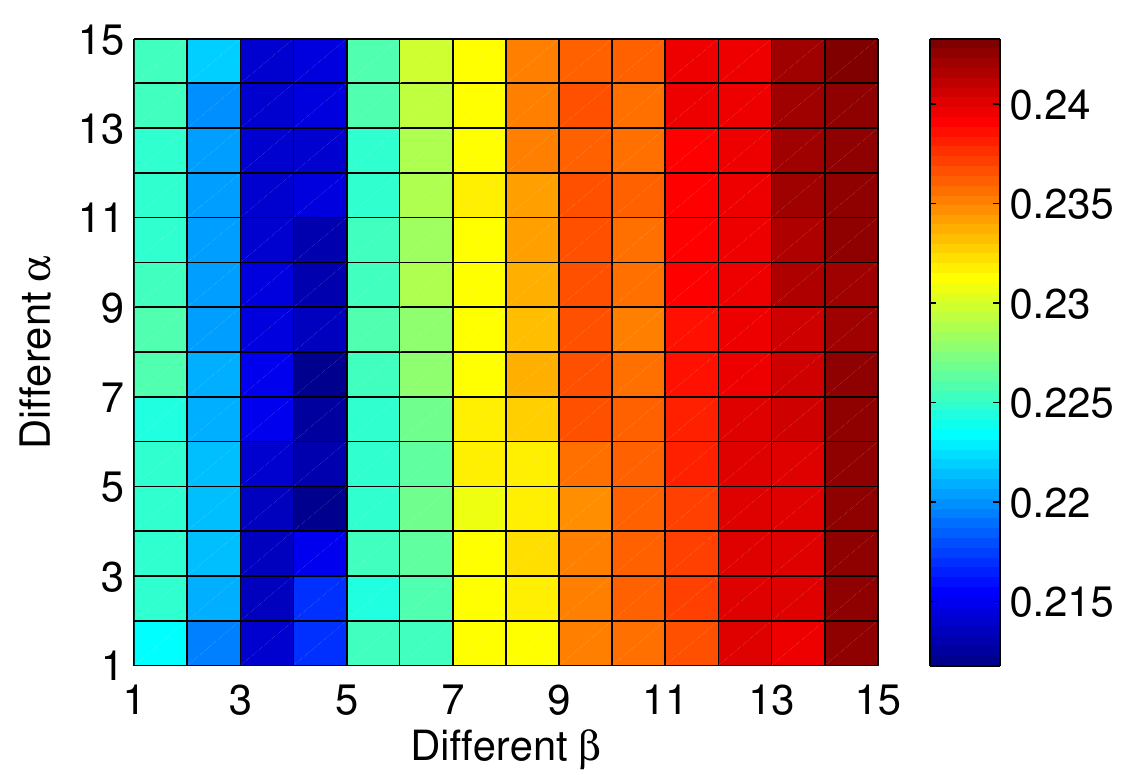}
\caption{\texttt{One-error}}
\end{subfigure}%\\
%\hspace{1cm}
\begin{subfigure}[t]{1.7in}
\centering
\includegraphics[width=1.7in]{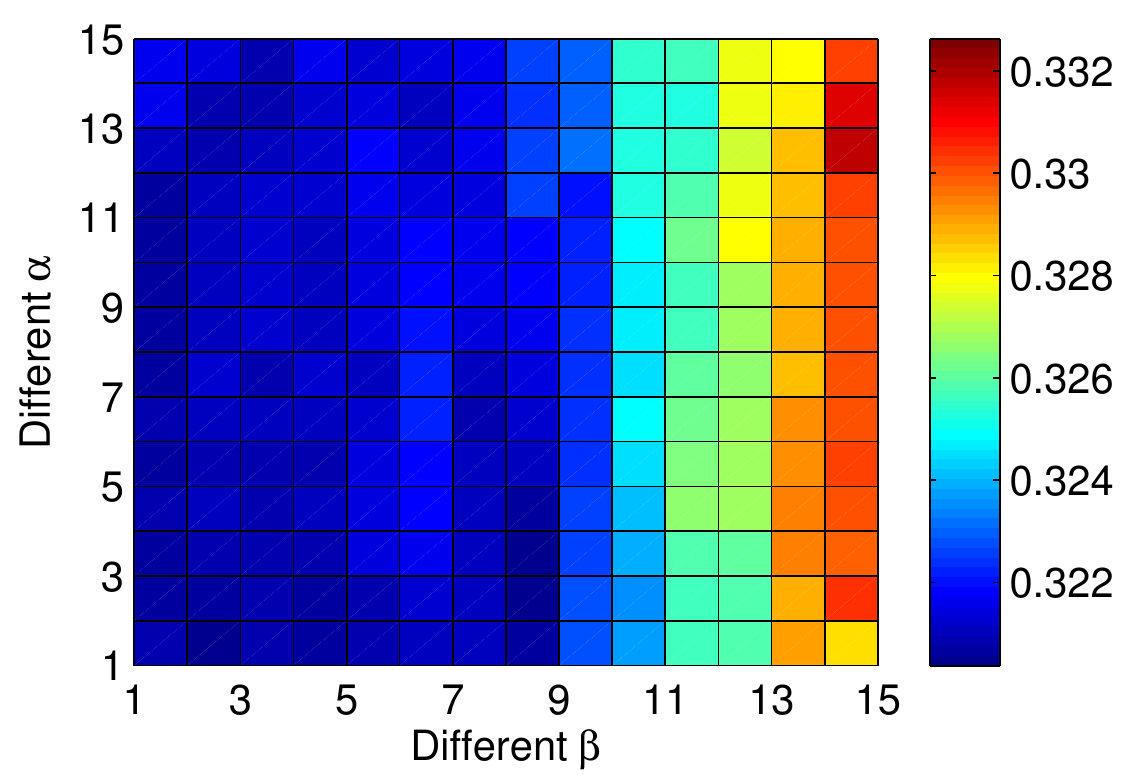}
\caption{\texttt{Coverage}}
\end{subfigure}%
%\hspace{1cm}
\begin{subfigure}[t]{1.7in}
\centering
\includegraphics[width=1.7in]{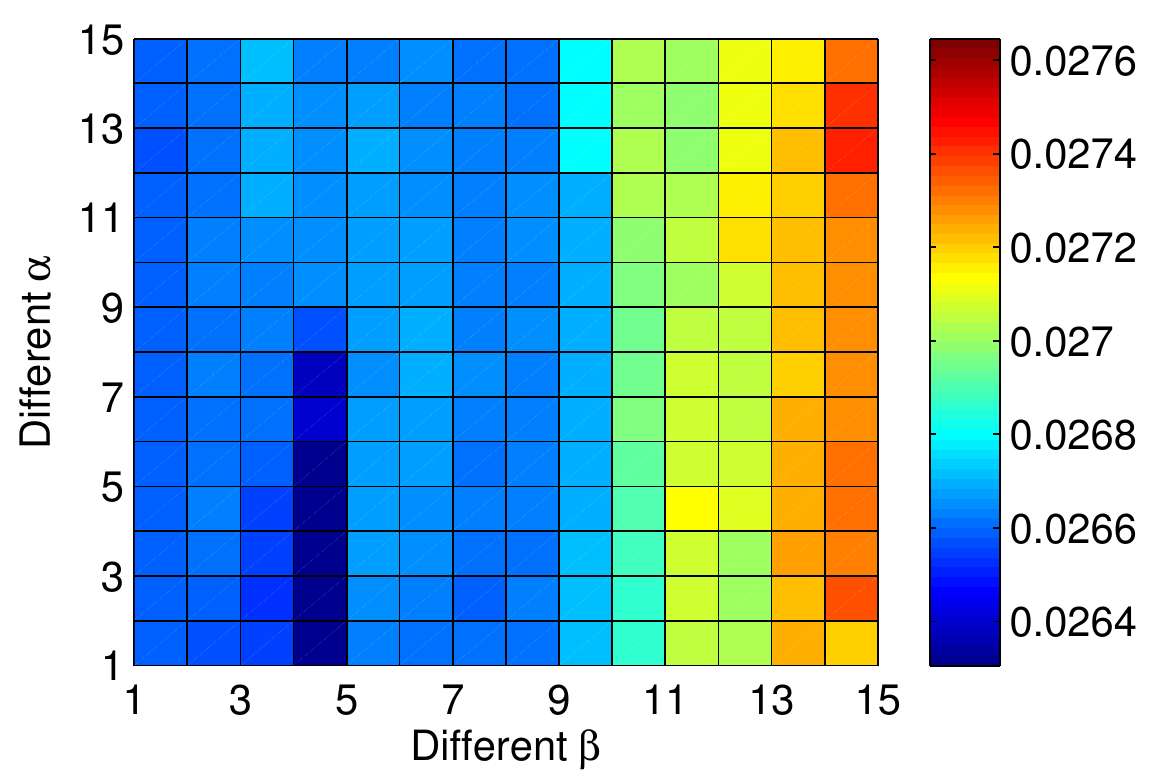}
\caption{\texttt{Ranking loss}}
\end{subfigure}%
\caption{The results of choosing different combinations of $\alpha$ and $\beta$. Top left to bottom right: the results of \texttt{Hamming loss}, \texttt{One-error}, \texttt{Coverage}, and \texttt{Ranking loss}.}\label{exp6}
\vspace{-0.2cm}
\end{figure*}

%\rev{[Comment: The figure should be drawn to the right scale.]}

%\rev{[Comment: You may also consider perform experiment using your PALAS model for only feature-based representation. That said, it is interesting to see how each of your PALAS components contribute to the prediction results.]}

\subsection{Clinical Explanation and Discussion}
%\textbf{Clinical Explanation and Discussion:}
The evaluation metrics that we used so far (\ie, \texttt{Hamming loss}, \texttt{One-error}, \texttt{Coverage}, and \texttt{Ranking loss}) are originally designed to evaluate performance of multi-label prediction. We use these metrics as the reference from the data-driven perspective.
%\rev{[Comment: unclear what do you mean here]}.
%However, as previously \rev{discussed, the prescription drug decision is affected by many factors such as the drug side effect, drug price, and patient background, thus} determining a typical metric for prescription prediction is extremely difficult.

We now report the evaluation from the clinical perspective.
%\rev{However, the drug prescription is complicated and subjective, as different neuro-pathologist could prescribe different drugs to treat a same patient.}
Specifically, we have asked three experienced \yh{neurologists} to double-check with the predicted results (\eg, the third prediction results in Figure \ref{exp4}). Specifically, the \yh{neurologists} are asked to determine whether the predicted drugs can improve the observed symptoms of a patient from the clinical perspective. A prescription prediction is said to be correct if more than 2 in 3 \yh{neurologists} agree it will improve PD, otherwise the prescription prediction is regarded as incorrect. This evaluation enables us to calculate the overall clinical accuracy as shown in Figure \ref{exp10}. For the prediction of 136 patients, PALAS achieves 87.5$\%$ accuracy (\ie, 119 correct and 17 incorrect), outperforms all the competitive methods (\ie, SMBR, BR, CLR, ML-$k$NN, Rank-SVM, MLMVL-MM and MVML).

%\rev{[Comment: if you perform 10-fold cross validation, the testing samples should be only 10\% of the samples. It is unclear how the results of 136 patients are obtained here.]}

In addition, we found that the incorrect prediction mostly happens for the prescription drug of non-motor symptoms. This could be due to the fact that, compared with motor-symptoms, the non-motor symptoms are more difficult to be accurately quantized.

\begin{figure}[htbp]
\centering
\includegraphics[width=2.8in]{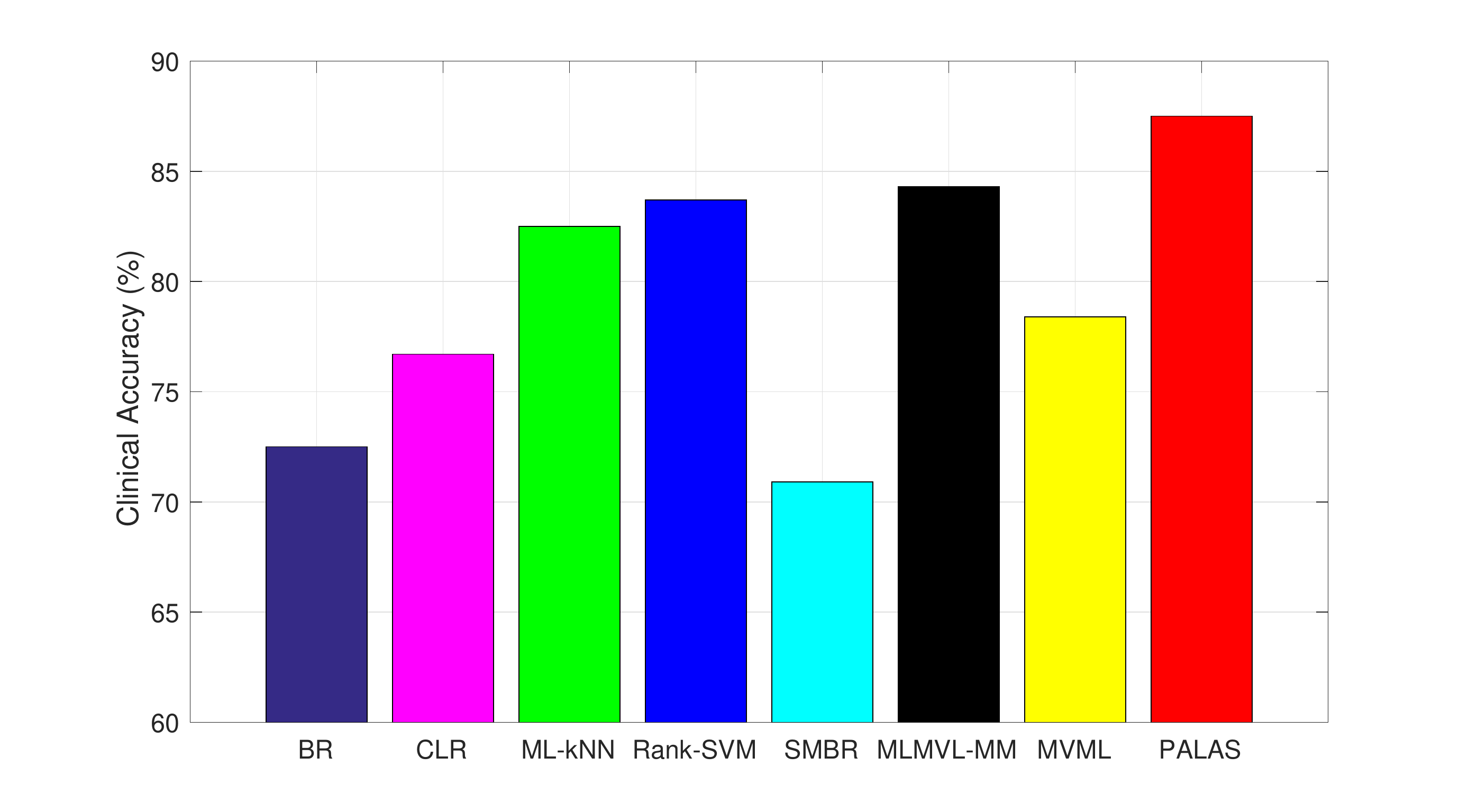}
\caption{The clinical accuracy of prescription prediction (evaluated by three \yh{neurologists}) using different methods.}\label{exp10}
\vspace{-0.3cm}
\end{figure}

\yh{For the limitation of our current study, firstly, the proposed method requires all the records (\ie, motor and non-motor symptoms) are completed. However, in some cases, we might encounter the data missing problem during recoding these symptoms. Secondly, for new coming data, we need to retrain our model by mixing the existed and new collected data. A better way is to design an online algorithm to update the learned model without retraining. Finally, a more interpretable way could be developed to enhance the current data-driven prediction. For these aforementioned issues, we will carefully consider in our future work. Also, in our future work, the threats-to-validity of our prediction results on a large population group will be designed and conducted.}

\yh{For the setting of supervised and transductive learning, from Table \ref{exp1}, we observe that the advantage of transductive setting is that it has a slight performance improvement with lower standard deviation compared with supervised setting.
However, its disadvantage comes from that for the new testing data, transductive setting is required to re-train the model. In this paper, we introduce these two settings to fully understand our method: 1) for the small sample-sized problem, transductive setting is preferable since re-training does not waste too much time and 2) for the large sample-sized problem, we will directly use supervised setting.}

\yhk{We wish to clarify that, whether or not the learned latent connections represent real causal relationships is beyond the consideration of the model/algorithm. That's exactly why the presence of experienced neurologists is still a must in practice. The model/algorithm should be regarded as an assistant to neurologists, recommending a good initial therapy from the data-driven perspective. It is still the neurologists' responsibility to treat the patients properly.}

\yh{For possible clinical application, the proposed method could be potentially employed in several scenario. The main scenario is that when the neurologists have disagreement on selection of drugs, the proposed method could provide a reference from data-driven perspective. Also, our method has its advantage to be extended to other important scenario: 1) A correlation study between patients and drugs can be further investigated to deeply understand different drug preference. 2) Different alternatives of drug recommendation from various aspects (by using different costs, \eg, number, price) can be generated together to aid neurologist during decision making.}

%\begin{table}[htbp]
%\renewcommand\arraystretch{1.4}
%\setlength{\tabcolsep}{3.5pt}
%\centering
%\caption{Effectiveness of the transductive setting.}\label{exp9}
%\scriptsize{
%\begin{tabular}{@{\extracolsep{\fill}}|c|cc|}
%\hline
%Method & PALAS$_\texttt{wt}$ \scriptsize{($k$=50)} & PALAS \scriptsize{($k$=50)} \\
%\hline\hline
%\texttt{Hamming loss}\scriptsize{$\downarrow$} & 0.169 $\pm$ 0.020 & \textbf{0.168 $\pm$ 0.017}  \\
%\texttt{One-error}\scriptsize{$\downarrow$} & 0.226 $\pm$ 0.121 & \textbf{0.221 $\pm$ 0.110} \\
%\texttt{Coverage}\scriptsize{$\downarrow$} & 0.324 $\pm$ 0.073 & \textbf{0.320 $\pm$ 0.065} \\
%\texttt{Ranking loss}\scriptsize{$\downarrow$} & \textbf{0.026} $\pm$ 0.008 & \textbf{0.026 $\pm$ 0.006} \\
%\hline
%\end{tabular}
%}
%\end{table}

\section{Conclusion}
\label{sec:conclusion}
We study a novel problem in this paper, \ie, computer-aided prescription for PD, aiming to predict the suitable treatment drugs according to the observed motor and non-motor symptoms of a PD patient. The highlights of our work include: (1) our method is the first attempt to study the automatic prescription prediction for PD patient, (2) we adopt the multi-modality representation to incorporate various representations for better prediction performance, (3) the proposed PALAS model is able to capture the intrinsic symptom-to-drug relationship.
Experimental results validate the effectiveness of our method compared with other related baseline methods.

%Besides the novelty of the problem, data, and prediction model, the proposed method has its clinical potential. We will further extend and employ our model for PD treatment in practice.

Our future directions include: (1) using wearable sensors to automatically record more information, (2) employing new modality (\eg, eye-tracker) to investigate personal habit of patients, and (3) designing an on-line learning strategy to incorporate new PD patients and the corresponding human-based prescription records, without re-training the current model.

%\section*{Acknowledgment}
%This work was supported by Natural Science Foundation of China (Grant Nos. 61432008, 61673203, 61603193), CCF-Tencent Open Research Fund (Grant No. RAGR20180114), and Young Elite Scientists Sponsorship Program by CAST (Grant No. 2016QNRC001).

% Can use something like this to put references on a page
% by themselves when using endfloat and the captionsoff option.
\ifCLASSOPTIONcaptionsoff
  \newpage
\fi

\end{document}